\documentclass[12pt]{article}

\textwidth = 16 cm \textheight = 22 cm
\oddsidemargin = 0 cm \evensidemargin = 0 cm \topmargin = 0 cm
\parskip = 1 mm

\usepackage{amsmath,amssymb,amstext,amsthm,mathtools,bbm,url}
\usepackage{tcolorbox,listings}

% Definitions of handy macros can go here

\newcommand{\eps}{\varepsilon}

\DeclareMathOperator{\opt}{OPT}

\DeclareMathOperator*{\argmin}{argmin}
\newcommand{\fcal}{\mathcal{F}}

\newtheorem{theorem}{Theorem}
\newtheorem{definition}[theorem]{Definition}

\newtheorem{lemma}[theorem]{Lemma}
\newtheorem{corollary}[theorem]{Corollary}
\newtheorem{remark}[theorem]{Remark}

 \begin{document}
\title{Sample-Efficient Learning of Mixtures}
\author{Hassan Ashtiani\\
Department of Computer Science\\
University of Waterloo\\
mhzokaei@uwaterloo.ca
\and Shai Ben-David \\
Department of Computer Science\\
University of Waterloo\\
shai@uwaterloo.ca
\and Abbas Mehrabian \\
Simons Institute for Theory of Computing\\ 
University of California, Berkeley\\
abbasmehrabian@gmail.com}
\maketitle

\begin{abstract}%
We consider PAC learning of probability distributions (a.k.a.\ density estimation), where we are given an i.i.d.\ sample generated from an unknown target distribution, and want to output a distribution that is close to the target in total variation distance. Let $\mathcal F$ be an arbitrary class of probability distributions, and let $\mathcal{F}^k$ denote the class of $k$-mixtures of elements of $\mathcal F$. Assuming the existence of a method for learning $\mathcal F$ with sample complexity $m_{\mathcal{F}}(\epsilon)$, we provide a method for learning $\mathcal F^k$ with sample complexity $O({k\log k \cdot m_{\mathcal F}(\eps) }/{\epsilon^{2}})$. Our mixture learning algorithm has the property that, if the $\mathcal F$-learner is proper and agnostic, then the $\mathcal F^k$-learner would be proper and agnostic as well.

This general result enables us to improve the best known sample complexity upper bounds for a variety of important mixture classes. First, we show that the class of mixtures of $k$ axis-aligned Gaussians in $\mathbb{R}^d$ is PAC-learnable in the agnostic setting with $\widetilde{O}({kd}/{\epsilon ^ 4})$ samples,
which is tight in $k$ and $d$ up to logarithmic factors. Second, we show that the class of mixtures of $k$ Gaussians in $\mathbb{R}^d$ is PAC-learnable in the agnostic setting with sample complexity $\widetilde{O}({kd^2}/{\epsilon ^ 4})$, which improves the previous known bounds of $\widetilde{O}({k^3d^2}/{\epsilon ^ 4})$ and $\widetilde{O}(k^4d^4/\epsilon ^ 2)$ in its dependence on $k$ and $d$. Finally, we show that the class of mixtures of $k$ log-concave distributions over $\mathbb{R}^d$ is PAC-learnable using $\widetilde{O}(d^{(d+5)/2}\eps^{-(d+9)/2}k)$ samples.
\end{abstract}

%\title{Sample-Efficient Learning of Mixtures}

%\author{\name Hassan Ashtiani \email mhzokaei@uwaterloo.ca \\ 
%\name Shai Ben-David \email shai@cs.uwaterloo.ca\\ 
%\addr Department of Computer Science\\ 
%University of Waterloo\\ 
%Waterloo, ON, Canada
%\AND
%\name Abbas Mehrabian \email AbbasMehrabian@gmail.com\\
%\addr Simons Institute for the Theory of Computing\\
%University of California at Berkeley\\
%Berkeley, CA, USA}

%\editor{}

%Version of \today

%\maketitle

%\twocolumn[

%\aistatstitle{Sample-Efficient Learning of Mixtures}

%\aistatsauthor{ Author 1 \And Author 2 \And  Author 3 }

%\aistatsaddress{ Institution 1 \And  Institution 2 \And Institution 3 } ]
% #####################################
% #####################################
\section{Introduction}
Learning distributions is a fundamental problem in statistics and computer science, and has numerous applications in machine learning and signal processing. 
The problem can be stated as:
{\begin{quote}{Given an i.i.d.\ sample generated from an unknown probability distribution $g$, find a distribution $\hat{g}$ that is close to $g$ in total variation distance.\footnote{Total variation distance is a prominent distance measure between distributions. For a discussion on this and other choices see~\cite[Chapter~5]{devroye_book}.
}
}\end{quote}}
This strong notion of learning is not possible in general using a finite number of samples. However, if we assume that the target distribution belongs to or can be approximated by a family of distributions, then there is hope to acquire algorithms with finite-sample guarantees. In this paper, we study the important family of mixture models within this framework.

Notice that we consider PAC learning of distributions (a.k.a.\ density estimation), which is different from parameter estimation.
In the parameter estimation problem, it is assumed that the target distribution belongs to some parametric class, and the goal is to learn/identify the parameters (see, e.g.,~\cite{dasgupta1999learning,Belkin,moitravaliant}).

As an example of our setting, assume that the target distribution is a Gaussian mixture with $k$ components in $\mathbb{R}^d$. Then, how many examples do we need to find a distribution that is $\epsilon$-close to the target? This \emph{sample complexity} question, as well as the corresponding \emph{computational complexity} question, has  received a lot of attention recently (see, e.g.~\cite{axis_aligned,onedimensional,spherical,robustestimation,gaussian_mixture,nearlylinear}).

In this paper, we consider a scenario in which we are given a method for learning a class of distributions (e.g., Gaussians). Then, we ask whether we can use it, as a black box, to come up with an algorithm for learning a mixture of such distributions (e.g., mixture of Gaussians). We will show that the answer to this question is affirmative.

We propose a generic method for learning mixture models. Roughly speaking, we show that by going from learning a single distribution from a class
to learning a mixture of $k$ distributions from the same class, the sample complexity is multiplied by a factor of at most ${(k\log^2 k)}/{\epsilon^2}$. This result is general, and yet it is surprisingly tight in many important cases.
In this paper, we assume that the algorithm knows the number of components $k$.

As a demonstration, we show that our method provides a better sample complexity for learning mixtures of Gaussians than the state of the art. In particular, for learning mixtures of $k$ Gaussians in $\mathbb{R}^d$, our method requires $\widetilde{O}({d^2k}/{\epsilon^4})$ samples, improving by a factor of $k^2$ over the $\widetilde{O}({d^2k^3}/{\epsilon^4})$ bound of~\cite{gaussian_mixture}. 
Furthermore, for the special case of mixtures of axis-aligned Gaussians, we provide an upper bound of  $\widetilde{O}({dk}/{\epsilon^{4}})$, which is the first optimal bound with respect to $k$ and $d$  up to logarithmic factors, and improves upon the 
$\widetilde{O}({dk^9}/{\epsilon^{4}})$ bound of~\cite{spherical}, which is only shown for the subclass of spherical Gaussians.

One merit of our approach is that it can be applied in the agnostic (a.k.a.\ robust) setting, where the target distribution does not necessarily belong to the mixture model of choice. 
To guarantee such a result, we need the black box to work in the agnostic setting. 
For example, an agnostic learning method for learning Gaussians can be lifted to work for Gaussian mixtures in the agnostic setting.

We would like to emphasize that our focus is on the information-theoretic aspects of learning rather than the computational ones; in particular, although our framework is algorithmic, its running time is exponential in the number of required samples. Proving sample complexity bounds is important in understanding the statistical nature of various classes of distributions (see, e.g., the recent work of \cite{logconcave}), and may pave the way for developing efficient algorithms.

% The main drawback of our approach is its computational complexity:
% if $m$ is the sample complexity, then our running time is $O(mk^m)$,
% i.e.\ exponential in the parameters.
% However, it is applicable in situations where computational power is extensive but training data is scarce.
% Moreover, recently there have been some work suggesting there is not much hope for efficient learning of, for example, mixtures of Gaussians, using statistical query algorithms (see~\cite{gaussian_mixture} for precise statements).

%  What does our result mean for other mixtures (non-Guassian)? Do we improve sample complexity?

% \hassan{can we extend this result for log-concave distributions to a higher dimension using our result? https://arxiv.org/pdf/1606.03077.pdf }

% \hassan{I think we should  compare our result with the general result for the mixture of exponential families, which is a conclusion of Theorem 8.2 in Devroyes book...  it already shows that we are not tight in terms of epsilon, but much better in other parameters ...}
% \abas{ please add a little bit about this}

\subsection{Our Results}

Let $\mathcal F$ be a class of probability distributions, and let $\mathcal{F}^k$ denote the class of $k$-mixtures of elements of $\mathcal F$.
In our main result, Theorem~\ref{thm:main}, assuming the existence of a method for learning $\mathcal F$ with sample complexity $m_{\mathcal{F}}(\epsilon)$,
we provide a method for learning $\mathcal F^k$ with sample complexity
$O({k\log^2 k \cdot m_{\mathcal F}(\eps) }/{\epsilon^{2}})$.
Our mixture learning algorithm has the property that, if the $\mathcal F$-learner is proper, then the $\mathcal F^k$-learner would be proper as well (i.e., the learner will always output a member of $\mathcal{F}^k$).
Furthermore, the algorithm works in the more general agnostic setting provided that the base learners are agnostic learners. 

We  provide several applications of our main result.
In Theorem~\ref{axisalignedupperbound}, we show that the class of mixtures of $k$ axis-aligned Gaussians in $\mathbb{R}^d$ 
is PAC-learnable in the agnostic setting with
sample complexity $O({kd\log^2 k}/{\epsilon ^ 4})$ (see Theorem~\ref{thm:lowerbound}).
This bound is tight in terms of $k$ and $d$ up to logarithmic factors.
In Theorem~\ref{gaussianupperbound}, we show that the class of mixtures of $k$ Gaussians in $\mathbb{R}^d$ 
is PAC-learnable in the agnostic setting with
sample complexity 
$O({kd^2\log^2 k}/{\epsilon ^ 4})$. Finally, in Theorem~\ref{logconcave}, we prove that the class of mixtures of $k$ log-concave distributions over $\mathbb{R}^d$ is PAC-learnable using $\widetilde{O}(d^{(d+5)/2}\eps^{-(d+9)/2}k)$  samples. To the best of our knowledge, this is the first upper bound on the sample complexity of learning this class.

\subsection{Related Work}
PAC learning of distributions was introduced by~\cite{Kearns}, we refer the reader to~\cite{Diakonikolas2016} for a recent survey. A closely related line of research in statistics (in which more emphasis is on sample complexity) is density estimation, for which the book by \cite{devroye_book} is an excellent resource.

One approach for studying the sample complexity of learning a class of distributions is bounding the VC-dimension of its associated Yatrocas class (see Definition~\ref{def_yatrocas}), and applying results such as Theorem~\ref{thm:distributionVC}\footnote{These VC-dimensions have mainly been studied for the purpose of proving generalization bounds for neural networks with sigmoid activation functions.}. In particular, the VC-dimension bound of \cite[Theorem~8.14]{AB99} -- which is based on the work of~\cite{Karpinski} -- implies a sample complexity upper bound of 
 $O((k^4d^2+k^3d^3)/\eps^2)$
for PAC learning mixtures of 
axis-aligned Gaussians,
and an upper bound of $O(k^4d^4/\eps^2)$ for PAC learning mixtures of general Gaussians (both results hold in the more general agnostic setting).

A sample complexity upper bound of $O(d^2 k^3 \log^2k / \eps^4)$ for  learning mixtures of Gaussians
in the realizable setting was proved in~\cite[Theorem~A.1]{gaussian_mixture} (the running time of their algorithm is not polynomial).
Our algorithm is motivated by theirs, but we have introduced several new ideas in the algorithm and in the analysis,
which has resulted in improving the sample complexity bound by a factor of $k^2$ and an algorithm that works in the more general agnostic setting.

% For mixtures of axis-aligned Gaussians, an algorithm with running time
% $(n/\eps)^{O(k^3)}$ was given in~\cite{axis_aligned} (it is not clear how much their sample complexity is).\hassan{remove this? also because it is for KL not total variation}
For mixtures of spherical Gaussians, 
a polynomial time algorithm for the realizable setting with sample complexity
$O(dk^9\log^2(d)/\eps^4)$
was proposed in \cite[Theorem~11]{spherical}.
We improve  their sample complexity by a factor of $\widetilde{O}(k^8)$, and moreover our algorithm works in the agnostic setting, too.
In the special case of $d=1$, a non-proper agnostic polynomial time algorithm  with the optimal sample complexity of $\widetilde{O}(k/\eps^2)$ 
was given in~\cite{onedimensional}, and a proper agnostic algorithm with the same sample complexity and  better running time was given in~\cite{gmm_1d_proper}.

An important question, which we do not address in this paper, is finding polynomial time algorithms for learning distributions.
See~\cite{gaussian_mixture} for the state-of-the-art results on computational complexity of learning mixtures of Gaussians.
Another important setting is computational complexity in the agnostic learning, see, e.g.,~\cite{robustestimation} for some positive results.

A related line of research is parameter estimation for mixtures of Gaussians, see, e.g.,~\cite{dasgupta1999learning,Belkin,moitravaliant}, who gave polynomial time algorithms for this problem assuming certain separability conditions (these algorithms are polynomial in the dimension and the error tolerance but exponential in the number of components).
Recall that parameter estimation is a more difficult problem and any algorithm for parameter estimation requires some separability assumptions for the target Gaussians\footnote{E.g.,\ consider the case that $k=2$ and the two components are identical; then there is no way to learn their mixing weights.}, whereas for density estimation no such assumption is needed.

We finally remark that characterizing the sample complexity of learning a class of distributions in general is an open problem, even for the realizable (i.e., non-agnostic) case (see~\cite[Open Problem~15.1]{Diakonikolas2016}).

\section{The Formal Framework}
Generally speaking, a \emph{distribution learning method} is an algorithm that takes a 
sample of i.i.d.\ points from distribution $g$
as input, and outputs (a description) of a distribution $\hat{g}$ as an estimation for $g$.
Furthermore, we assume that $g$ belongs to or can be approximated by class $\mathcal{F}$ of distributions,
and we may require that $\hat{g}$ also belongs to this class (i.e., proper learning).

Let $f_1$ and $f_2$ be two probability distributions defined over the Borel $\sigma$-algebra $\mathcal{B}$. The total variation distance between $f_1$ and $f_2$ is defined as
\[\|f_1- f_2\|_{TV} = \sup_{B\in \mathcal{B}}|f_1(B) - f_2(B)| = \frac{1}{2}\|f_1 - f_2\|_1 \:,
\]
where 
\[\|f\|_1\coloneqq \int_{-\infty}^{+\infty} |f(x)|\mathrm{d} x\] 
is the $L_1$ norm of $f$.
In the following definitions, $\mathcal{F}$ is a
 class of probability distributions, and $g$ is a  distribution not necessarily in $\mathcal{F}$.
Denote the set $\{1,2,...,m\}$ by $[m]$.
All logarithms are in the natural base.
For a function $g$ and a class of distributions $\fcal$, we define
\[
\opt(\fcal, g) \coloneqq \inf_{f\in \mathcal{F}}\|f-g\|_1
\]

\begin{definition}[$\eps$-approximation, $(\eps, C)$-approximation]
A distribution $\hat{g}$ is  an \emph{$\eps$-approximation} for $g$ if $ \|\hat{g}- g\|_1 \leq \eps$.
A distribution $\hat{g}$ is an \emph{$(\epsilon, C)$-approximation} for $g$ with respect to $\mathcal{F}$ if 
\[ \|\hat{g}- g\|_1 \leq C\times \opt(\fcal, g) + \eps\]
\end{definition}

\begin{definition}[PAC-Learning Distributions, Realizable Setting]
A distribution learning method is called a \emph{(realizable) PAC-learner} for $\mathcal{F}$ with sample complexity $m_{\mathcal{F}}(\epsilon, \delta)$, if for all distribution $g\in\mathcal F$ and all $\epsilon, \delta >0$, given $\epsilon$, $\delta$, and a sample of size $m_{\mathcal{F}}(\epsilon, \delta)$, with probability at least $1-\delta$ outputs an $\epsilon$-approximation of $g$.
\end{definition}

\begin{definition}[PAC-Learning Distributions, Agnostic Setting]
For $C>0$, a distribution learning method is called a \emph{$C$-agnostic PAC-learner for $\mathcal{F}$} with sample complexity $m_{\mathcal{F}}^C(\epsilon, \delta)$, if for all distributions $g$ and all $\epsilon, \delta >0$, given $\epsilon$, $\delta$, and a sample of size $m_{\mathcal{F}}^C(\epsilon, \delta)$, with probability at least $1-\delta$ outputs an $(\epsilon, C)$-approximation of $g$.\footnote{Note that in some papers, only the case $C\leq1$ is called agnostic learning, and the case $C>1$ is called semi-agnostic learning.}
\end{definition}

Clearly, a $C$-agnostic PAC-learner (for any constant $C$)
is also a realizable PAC-learner, with the same error parameter $\eps$.
Conversely a realizable PAC-learner can be thought of an $\infty$-agnostic PAC-learner.

% {\bf Remarks.} We do not assume $g\in{\mathcal{F}}$. In other words, we consider the general robust learning (i.e. agnostic learning when $C=1$, or semi-agnostic when $C>1$). Also, we focus on the \emph{proper} learning scenario, where the learner is required to output a distribution from the given class; that is $\hat{g}\in{\mathcal{F}}$.

% #####################################
\section{Learning Mixture Models}

Let $\Delta_n$ denote the $n$-dimensional simplex:

\[
\Delta_n \:= \{ (w_1,\dots,w_n) : w_i\geq 0, \sum_{i=1}^k w_i=1\}
\]

\begin{definition}
Let $\mathcal{F}$ be a class of probability distributions. Then the class of \emph{$k$-mixtures of $\mathcal{F}$}, written $\mathcal{F}^k$, is defined as 
$$\mathcal{F}^k \coloneqq \left\{\sum_{i=1}^{k}w_{i}f_{i}: (w_1,\dots,w_k)\in \Delta_k ,
f_1,\dots,f_k\in\mathcal F
\right\}.$$
\end{definition}

Assume that we have a method to PAC-learn $\mathcal{F}$. Does this mean that we can PAC-learn $\mathcal{F}^k$? And if so, what is the sample complexity of this task?
Our main theorem gives an affirmative answer to the first question, and provides a bound for sample complexity of learning $\mathcal{F}^k$.

\begin{theorem}
\label{thm:main}
Assume that $\mathcal{F}$ has a $C$-agnostic PAC-learner 
with sample complexity $m_{\mathcal{F}}^C(\epsilon, \delta) = {\lambda(\mathcal F,\delta)}/{\epsilon^\alpha}$ for some $C>0$, $\alpha \geq 1$ and some function
 $\lambda(\mathcal F,\delta) = \Omega(\log(1/\delta))$. 
Then there exists a $3C$-agnostic PAC-learner for the class
$\mathcal{F}^k$  requiring $m_{\mathcal{F}^k}^{3C}(\epsilon, \delta) =$

\[ O\left(\frac{\lambda (\mathcal F, \frac{\delta}{3k}) k\log k}{\epsilon^{\alpha+2}}\right)
=
O\left(\frac{k\log k \cdot m_{\mathcal F}(\eps, \frac{\delta}{3k}) }{\epsilon^{2}}\right)
\] 
samples.
\end{theorem}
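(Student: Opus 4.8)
The plan is to split the $\mathcal F^k$-learner into a \emph{candidate-generation} phase and a \emph{selection} phase, so that the final factor $3C$ factors as $C$ (from generation) times $3$ (from selection). For the selection phase I would invoke (or reprove) the standard minimum-distance / Scheffé-type estimate: given a finite family $\mathcal C$ of hypotheses and $O(\log|\mathcal C|/\eps^2)$ fresh samples from $g$, one can with high probability output some $\hat g\in\mathcal C$ with $\|\hat g-g\|_1\le 3\min_{f\in\mathcal C}\|f-g\|_1+\eps/2$. Thus it suffices to build $\mathcal C\subseteq\mathcal F^k$ with $\log|\mathcal C|=\widetilde{O}(k\,m_{\mathcal F})$ that is guaranteed to contain a $(C,\eps/6)$-approximation of $g$, where $m_{\mathcal F}:=m_{\mathcal F}^{C}(\Theta(\eps),\delta/(3k))=\Theta(\lambda(\mathcal F,\delta/(3k))/\eps^\alpha)$; feeding $\mathcal C$ through selection then yields a $(3C,\eps)$-approximation, and the $\log|\mathcal C|/\eps^2$ cost of selection produces both the extra $1/\eps^2$ and the $k\log k$ factor.

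The crux is a decomposition statement, which I would prove first: writing $g$ as a $k$-fold mixture of \emph{samplable} parts costs no more than $\opt(\mathcal F^k,g)$, i.e.
\[
\opt(\mathcal F^k,g)=\inf\Bigl\{\textstyle\sum_{i=1}^{k} w_i\,\opt(\mathcal F,p_i):\ g=\sum_{i=1}^{k} w_i p_i\Bigr\}.
\]
The inequality ``$\le$'' is immediate (project each $p_i$ onto $\mathcal F$ and recombine). For ``$\ge$'' I fix a near-optimal $\tilde g=\sum_i w_i f_i$ with $\|\tilde g-g\|_1\le\opt+\eta$ and build a partition of $g$ by splitting its mass into $\min(g,\tilde g)$ and the excess $(g-\tilde g)_+$: on the common part I assign point $x$ to component $i$ with the posterior probability $w_i f_i(x)/\tilde g(x)$, and I then \emph{redistribute the excess mass so that the resulting part-weights equal the original $w_i$ exactly}. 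This last step is what keeps the bound at $\opt$ rather than $2\opt$: with matched weights the reconstruction error telescopes, $\sum_i w_i\|p_i-f_i\|_1=\int(\tilde g-g)_+ +\int(g-\tilde g)_+=\|\tilde g-g\|_1\le\opt+\eta$, whence $\sum_i w_i\opt(\mathcal F,p_i)\le\opt+\eta$. Since $w_i p_i\le g$ pointwise after the construction (so $\sum_i w_i p_i/g\equiv1$), this decomposition is realized by a \emph{randomized partition} of an i.i.d.\ $g$-sample: label each sampled point by $i$ with probability $w_i p_i(x)/g(x)$, and the points labelled $i$ form an i.i.d.\ sample from $p_i$.

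Given this, candidate generation is: draw $N=\widetilde{O}(k\,m_{\mathcal F}/\eps)$ points from $g$; discard components of weight $\le\Theta(\eps/k)$ (their total mass contributes $O(\eps)$ to the $L_1$ error); guess the surviving weights on a grid of width $\Theta(\eps/k)$; and for each surviving component independently enumerate all $m_{\mathcal F}$-point subsamples, run the $C$-agnostic $\mathcal F$-learner (accuracy $\Theta(\eps)$, confidence $\delta/(3k)$) on each, and form all resulting weighted combinations. For the ``correct'' labelling realization, every surviving part receives $\ge m_{\mathcal F}$ points with high probability (Chernoff plus a union bound over the $\le k$ parts, the source of the $\log k$), so some enumerated subsample is a genuine i.i.d.\ $p_i$-sample and the learner returns $\hat p_i$ with $\|\hat p_i-p_i\|_1\le C\,\opt(\mathcal F,p_i)+\Theta(\eps)$; recombining with the matched weights gives a candidate within $C\sum_i w_i\opt(\mathcal F,p_i)+\Theta(\eps)\le C\,\opt+\eps/6$ of $g$. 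Here $|\mathcal C|\le(\text{grid size})\cdot\binom{N}{m_{\mathcal F}}^{k}$, so $\log|\mathcal C|=O(k\,m_{\mathcal F}\log N+k\log(k/\eps))=\widetilde{O}(k\,m_{\mathcal F})$, and selection then costs $\widetilde{O}(k\,m_{\mathcal F}/\eps^2)=O(k\log k\cdot\lambda(\mathcal F,\tfrac{\delta}{3k})/\eps^{\alpha+2})$ samples; allotting $\delta/3$ to each of the three failure events (the $\le k$ base learners, the concentration step, and selection) explains the $\delta/(3k)$.

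The main obstacle is the decomposition lemma together with the insistence on \emph{weight-matching}: the naive responsibility reweighting $p_i= r_i g/\!\int r_i g$ reconstructs $g$ but drifts the weights, inflating the bound to $2\opt$ and yielding only $6C$; getting exactly $C$ (hence $3C$ after selection, and in particular $3$ for a genuinely agnostic base learner) requires the excess-redistribution that restores the weights $w_i$ while keeping every part absolutely continuous with respect to $g$, so that it is samplable by subsampling. A secondary technical point is controlling $\log|\mathcal C|$: one must demand only $m_{\mathcal F}$ points \emph{per component} (making the enumeration $\binom{N}{m_{\mathcal F}}^{k}$ rather than $k^{N}$), which is what keeps the overhead at $\widetilde{O}(k\,m_{\mathcal F})$ and the final blow-up at $k\log k/\eps^2$. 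Finally, if the base learner is proper then every element of $\mathcal C$ lies in $\mathcal F^k$, so the selected output does too, giving properness for free.
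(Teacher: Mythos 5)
Your overall architecture matches the paper's: your decomposition statement and its proof (posterior splitting on the common part of $g$ and $\tilde g$, plus redistribution of the excess so that the component weights $w_i$ are preserved exactly) is essentially identical to the paper's Lemma~\ref{lem:projection}, and the two-phase design (candidate generation with the $C$-agnostic base learner, then factor-$3$ Scheff\'e selection over a finite class, with the $\delta/3$ bookkeeping and inherited properness) is the paper's algorithm. The gap is quantitative, and it sits exactly at what you call your ``secondary technical point'': your candidate-generation scheme does not deliver the stated bound. Because you require every surviving component --- including those of weight as small as $\Theta(\eps/k)$ --- to be learned to the \emph{fixed} accuracy $\Theta(\eps)$, you are forced to take $N=\Theta(k\,m_{\mathcal F}/\eps)$ first-phase samples, and then enumerating $m_{\mathcal F}$-point subsamples per component gives
\[
\log|\mathcal C|\;\ge\; k\log\binom{N}{m_{\mathcal F}}\;\ge\; k\,m_{\mathcal F}\log\bigl(N/m_{\mathcal F}\bigr)\;=\;k\,m_{\mathcal F}\bigl(\log k+\log(1/\eps)-O(1)\bigr),
\]
so the selection phase, whose sample cost is proportional to $\log|\mathcal C|/\eps^2$, requires $\Omega\bigl(k\,m_{\mathcal F}(\log k+\log(1/\eps))/\eps^2\bigr)$ samples. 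Your chain ``$\log|\mathcal C|=\widetilde O(k\,m_{\mathcal F})$, hence selection costs $\widetilde O(k\,m_{\mathcal F}/\eps^2)=O(k\log k\cdot\lambda(\mathcal F,\delta/3k)/\eps^{\alpha+2})$'' is therefore incorrect: the $\log(1/\eps)$ hidden in the $\widetilde O$ cannot be absorbed into $O(\log k)$ (fix $k$ and let $\eps\to0$). What your argument proves is $O\bigl(k(\log k+\log(1/\eps))\,m_{\mathcal F}/\eps^2\bigr)$, which is strictly weaker than the bound claimed in Theorem~\ref{thm:main}.

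The idea you are missing is the paper's variable-accuracy argument, which is precisely what removes the extra $1/\eps$ from the first phase and hence the $\log(1/\eps)$ from the enumeration. The paper draws only $s=O\bigl(k\,m_{\mathcal F}(\eps,\delta/3k)\bigr)$ points and gives component $i$ whatever number of points ($\approx w_i s/2$) it actually receives, so component $i$ is learned only to the weight-dependent accuracy $\eps_i=\bigl(2\lambda(\mathcal F,\delta/3k)/(w_i s)\bigr)^{1/\alpha}$, which degrades as $w_i$ shrinks; the aggregate error is nevertheless controlled because
\[
\sum_{i} w_i\eps_i \;=\;\Bigl(\tfrac{2\lambda(\mathcal F,\delta/3k)}{s}\Bigr)^{1/\alpha}\sum_i w_i^{1-1/\alpha}\;\le\;\Bigl(\tfrac{2k\lambda(\mathcal F,\delta/3k)}{s}\Bigr)^{1/\alpha}\;\le\;\eps,
\]
using Jensen's inequality and $\alpha\ge 1$. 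With $s$ free of any $1/\eps$ overhead, one can afford to enumerate \emph{all assignments} $A:S\to[k]$ (there are $k^s$ of them) rather than per-component subsamples, so $\log M=s\log k+k\log(1+k/\eps)=O(k\,m_{\mathcal F}\log k)$, and selection costs $O(k\log k\cdot m_{\mathcal F}/\eps^2)$ as stated. In short: your fixed-accuracy-per-component design inflates the candidate count in a way that your own enumeration trick cannot repair; replacing it with the weight-adaptive accuracies plus the Jensen step closes the gap.
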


Since a realizable PAC-learner is an $\infty$-agnostic PAC-learner, we immediately obtain the following corollary.

\begin{corollary}
Assume that $\mathcal{F}$ has a realizable PAC-learner 
with sample complexity $m_{\mathcal{F}}(\epsilon, \delta) = {\lambda(\mathcal F,\delta)}/{\epsilon^\alpha}$ for some $\alpha \geq 1$ and some function
 $\lambda(\mathcal F,\delta) = \Omega(\log(1/\delta))$. 
Then there exists a realizable PAC-learner for the class
$\mathcal{F}^k$  requiring $m_{\mathcal{F}^k}(\epsilon, \delta) =$

\[ O\left(\frac{\lambda (\mathcal F, \frac{\delta}{3k}) k\log k}{\epsilon^{\alpha+2}}\right)
=
O\left(\frac{k\log k \cdot m_{\mathcal F}(\eps, \frac{\delta}{3k}) }{\epsilon^{2}}\right)
\] 
samples.
\end{corollary}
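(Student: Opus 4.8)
The plan is to read the corollary off Theorem~\ref{thm:main} by taking $C=\infty$, using the equivalence noted earlier in the excerpt between a realizable PAC-learner and an $\infty$-agnostic PAC-learner. Concretely, I would feed the given realizable learner for $\mathcal F$, whose sample complexity is $\lambda(\mathcal F,\delta)/\epsilon^\alpha$, into Theorem~\ref{thm:main} in the role of the $C$-agnostic base learner with $C=\infty$. The theorem then returns a $3C$-agnostic learner for $\mathcal F^k$; since $3\cdot\infty=\infty$, this is exactly a realizable PAC-learner for $\mathcal F^k$. Because the sample-complexity bound $O\bigl(\lambda(\mathcal F,\tfrac{\delta}{3k})\,k\log k/\epsilon^{\alpha+2}\bigr)$ in the theorem carries no dependence on $C$, it transfers without change, which is precisely the claimed bound.

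To avoid reasoning with $\infty$ as a literal parameter, I would alternatively run the argument through the definition of approximation. In the realizable setting the target $g$ belongs to $\mathcal F^k$, so $\opt(\mathcal F^k,g)=0$, and the guarantee of Theorem~\ref{thm:main}, namely $\|\hat g-g\|_1\le 3C\cdot\opt(\mathcal F^k,g)+\epsilon$, degenerates to $\|\hat g-g\|_1\le\epsilon$ independently of $C$. This is exactly the $\epsilon$-approximation demanded of a realizable learner, so the output meets the realizable PAC specification with the same confidence $1-\delta$ and the same sample size.

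The one point that genuinely requires care—and the step I expect to be the crux—is that Theorem~\ref{thm:main} is stated for a base learner that is $C$-agnostic with $C$ finite, whereas here only a realizable base learner is assumed. I would resolve this by checking that the construction underlying Theorem~\ref{thm:main} only ever hands the base learner subproblems that are themselves realizable when the global task is realizable: if $g=\sum_i w_i f_i$ with every $f_i\in\mathcal F$, then the individual-component estimation tasks the algorithm produces have $\opt=0$ over $\mathcal F$, so a realizable base learner suffices at each invocation. Granting this, the entire analysis of Theorem~\ref{thm:main} goes through verbatim with every occurrence of $\opt$ set to zero, yielding both the realizable guarantee and the stated sample complexity. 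Should some internal step instead require nonzero-$\opt$ behavior from the base learner even in the realizable regime, the corollary would need a finite-$C$ agnostic base learner as a hypothesis; the clean $\infty$-agnostic reinterpretation, however, signals that no such step arises.
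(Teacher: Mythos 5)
Your proposal matches the paper's own derivation: the paper obtains the corollary in a single line by viewing the realizable learner as an $\infty$-agnostic learner and invoking Theorem~\ref{thm:main}, which is exactly your first two paragraphs (the degenerate guarantee $\|\hat g - g\|_1 \le \eps$ when $\opt(\mathcal F^k,g)=0$). Your third paragraph's check is the right due diligence and resolves the only real subtlety correctly: in the realizable regime the decomposition of Lemma~\ref{lem:projection} is trivial ($g=\sum_i w_i f_i$ with each $G_i=f_i\in\mathcal F$ literally, not merely at $L_1$-distance zero from $\mathcal F$), so every base-learner invocation that the analysis relies on is itself a realizable task and the proof of Theorem~\ref{thm:main} goes through verbatim.
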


Some remarks:
\begin{enumerate}

\item
Our mixture learning algorithm has the property that, if the $\mathcal F$-learner is proper, then the $\mathcal F^k$-learner is proper as well.

\item
The computational complexity of the resulting algorithm is exponential in the number of required samples.

\item
The condition
$\lambda(\mathcal F,\delta) = \Omega(\log(1/\delta))$
is a technical condition that holds for all interesting classes $\mathcal F$.

\item
One may wonder about tightness of this theorem.
In Theorem~2 in \cite{spherical}, it is shown that 
if $\mathcal F$ is the class of spherical Gaussians,  we have
$m_{\mathcal{F}^k}^{O(1)}(\epsilon, \delta) = \Omega(k m_{\mathcal F}(\eps, \delta/k) )$, therefore, the factor of $k$ is necessary in general. However, it is not clear whether the additional factor of
$\log k /\eps^2$ in the theorem is tight.

\item
The constant 3 (in the $3C$-agnostic result) comes from 
\cite[Theorem~6.3]{devroye_book} (see Theorem~\ref{thm:candidates}), and it is not clear whether it is necessary. If we allow for randomized algorithms (which produce a random distribution whose expected distance to the target is bounded by $\eps$), then the constant can be improved to 2, see \cite[Theorem~22]{density_estimation_lineartime}. 
\end{enumerate}

In the rest of this section we prove
Theorem~\ref{thm:main}.
Let $g$ be the true data generating distribution, 
and let
\begin{equation}\label{def_rho}
g^* = \argmin_{f \in\mathcal{F}^k} \|g-f\|_1
\textnormal{ and }
\rho = \|g^*-g\|_1=\opt(\mathcal F^k,g)\:.
\end{equation}
Note that although $g^*\in\mathcal{F}^k$,  
$g$ itself is not necessarily in the form of a mixture. 
Since our algorithm works for mixtures, we would first like to write $g$ in the form of a mixture of $k$ distributions, such that they are on average close to being in $\fcal$.
This is done via the following lemma.

\begin{lemma}\label{lem:projection}
Suppose that $g$ is a probability density function with $\opt(\fcal^k,g)=\rho$.
Then we may write 
\(g = \sum_{i\in [k]} w_i {G_i},\)
such that $w\in\Delta_k$, 
each $G_i$ is a density, and
we have 
$\sum_{i\in[k]} w_i \opt(\mathcal F,G_i) = \rho$.
\end{lemma}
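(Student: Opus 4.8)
The plan is to prove the two inequalities $\sum_{i}w_i\opt(\mathcal F,G_i)\ge\rho$ and $\sum_{i}w_i\opt(\mathcal F,G_i)\le\rho$ separately, the first being essentially free and the second requiring an explicit construction of the decomposition. For the lower bound, observe that for \emph{any} way of writing $g=\sum_i w_i G_i$ with $w\in\Delta_k$ and densities $G_i$, and for any choice of $h_i\in\mathcal F$, the mixture $\sum_i w_i h_i$ lies in $\mathcal F^k$; hence by the triangle inequality $\rho=\opt(\mathcal F^k,g)\le\bigl\|g-\sum_i w_i h_i\bigr\|_1\le\sum_i w_i\|G_i-h_i\|_1$. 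Since the right-hand side separates across $i$, taking the infimum over each $h_i\in\mathcal F$ independently yields $\rho\le\sum_i w_i\opt(\mathcal F,G_i)$. This holds for every valid decomposition, so it remains only to exhibit one attaining $\rho$.

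For the upper bound I would start from the optimal mixture. Writing $g^*=\sum_{i\in[k]}w_i f_i$ with $w\in\Delta_k$ and $f_i\in\mathcal F$ (so $\|g-g^*\|_1=\rho$), I will build sub-densities $\gamma_i\ge 0$ with $\sum_i\gamma_i=g$ and $\int\gamma_i=w_i$, and then set $G_i=\gamma_i/w_i$; the decomposition weights will thus be exactly the optimal weights $w_i$. Bounding $\opt(\mathcal F,G_i)\le\|G_i-f_i\|_1$ reduces the goal to showing $\sum_i\|\gamma_i-w_i f_i\|_1\le\rho=\int|g-g^*|$. The one idea that makes this work is to arrange the construction so that there is no cancellation: if for every $x$ the quantities $\gamma_i(x)-w_i f_i(x)$ all share the same sign, then $\sum_i|\gamma_i(x)-w_i f_i(x)|=\bigl|\sum_i\gamma_i(x)-\sum_i w_i f_i(x)\bigr|=|g(x)-g^*(x)|$ pointwise, and integrating gives exactly $\rho$. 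A decomposition that ignores signs only gives the weaker bound $2\rho$, so enforcing the sign condition is essential.

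To realize the sign condition together with the mass constraints $\int\gamma_i=w_i$, I would split the domain into $N=\{g<g^*\}$ and $P=\{g\ge g^*\}$. On $N$ I scale the components down proportionally, $\gamma_i=(g/g^*)\,w_i f_i$, which keeps $\gamma_i\le w_i f_i$ (correct sign) and uses up exactly the mass $g$ there. On $P$ I set $\gamma_i=w_i f_i+e_i$ with $e_i\ge 0$ distributing the surplus $g-g^*$, which keeps $\gamma_i\ge w_i f_i$ (correct sign). I expect the main obstacle to be reconciling these local choices with the global constraint $\int\gamma_i=w_i$: the scaling on $N$ removes the mass $c_i\coloneqq\int_N w_i f_i\,(g^*-g)/g^*$ from component $i$, so I must arrange $\int_P e_i$ to equal precisely this deficit $c_i$. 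The crux of the write-up is checking that this is feasible, namely that each $c_i\ge 0$ (which holds because $g^*\ge g$ on $N$) and that $\sum_i c_i=\int_P(g-g^*)$ equals the total available surplus (both equal $\rho/2$), so that, for instance, the proportional choice $e_i=\bigl(c_i/\sum_j c_j\bigr)(g-g^*)$ on $P$ meets every requirement.

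Finally I would verify the bookkeeping: $\gamma_i\ge 0$, $\sum_i\gamma_i=g$ on both regions, and $\int\gamma_i=w_i$ so that each $G_i=\gamma_i/w_i$ is a genuine density and $w\in\Delta_k$; the sign computation then delivers $\sum_i\|\gamma_i-w_i f_i\|_1=\rho$, hence $\sum_i w_i\opt(\mathcal F,G_i)\le\rho$, which together with the lower bound gives the claimed equality. I would also dispose of the degenerate cases separately: $\rho=0$, where $g=g^*$ and one simply takes $G_i=f_i$; points where $g^*=0$, which lie in $P$ and cause no division issues; and the existence of the minimizer $g^*$ assumed in~\eqref{def_rho}, otherwise passing to a near-optimal mixture and a limiting argument.
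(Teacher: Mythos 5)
Your proof is correct and takes essentially the same route as the paper's: your construction of $\gamma_i = w_i G_i$ — proportional scaling by $g/g^*$ on $\{g < g^*\}$ and distributing the surplus $g-g^*$ on the complement in proportion to each component's lost mass $c_i$ — is exactly the paper's definition of $G_i$ and $\Delta_i$, and your same-sign/no-cancellation observation is precisely the paper's step ``$f_i(x)>G_i(x)$ precisely on $X$,'' which turns the pointwise sum into $|g-g^*|$. The only differences are cosmetic: you spell out the easy lower bound $\rho \le \sum_i w_i \opt(\mathcal F, G_i)$ (which the paper leaves implicit, and which its main theorem never actually uses) and you flag the degenerate cases ($\rho=0$, non-attainment of the infimum), which the paper also glosses over.
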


%\begin{lemma}
%Let $q$ be a density that is $\rho$-close in $\ell_1$ distance to (a member of) $k$-mix$(\fcal)$. Then for every $r \geq 0$ there exist densities $\{g_i\}_{i=1}^k$ and mixing weights $\{w_i\}_{i=1}^k$ (so $w_i>0$ and $\sum_{i=1}^k w_i=1$) such that every $g_i$ is $r$-close in $\ell_1$ distance to (a member of) $\fcal$, and 
%$$\|q - \sum_{i=1}^k w_ig_i\|_1 \leq 2\rho/r$$
%\end{lemma}

\begin{proof}
Let $f\in \fcal^k$ be such that $\|g-f \|_1\leq \rho$, and let $X=\{x: g(x) < f(x)\}$. 
Suppose $f = \sum_{i\in[k]} w_i f_i$, where $f_i\in \fcal$.
Define 
 \[
    G_i(x) = \left\{\begin{array}{lr}
        {f_i(x)g(x)}/{f(x)}, & \text{for } x\in X \\
        f_i(x)+\Delta_i(x), & \text{for } x\notin X\\
        \end{array}\right.
  \]
where 
\[
\Delta_i(x) = {(g(x)-f(x))\left(\int_X f_i(x)(f(x)-g(x)) \mathrm{d} x /f(x) \right)} \bigg/ \left({\int_X (f(x)-g(x)) \mathrm{d} x}\right).
\]
Observe that each $G_i$ is a density and that $g=\sum_{i\in[k]}w_i G_i$.
Finally, note that $f_i(x)>G_i(x)$ precisely on $X$. Thus,
\begin{align*} \rho=\|g-f \|_1 = 
2\int_X  (f(x) - \sum_i w_i G_i(&x)) 
=2\int_X \sum_i w_i (f_i(x)-G_i(x)) \\&=2\int_X \sum_i w_i |f_i(x)-G_i(x)| = \sum_i w_i \|f_i-G_i\|_1. \qedhere 
\end{align*}
\end{proof}

By Lemma~\ref{lem:projection}, we have
\(g = \sum_{i\in [k]} w_i {G_i},\)
where each $G_i$ is a probability distribution.
Let \(\rho_i \coloneqq \opt (\fcal, G_i)\),
and by the lemma we have  
\begin{equation}
\sum_{i \in[k]} w_i \rho_i = \rho.\label{sumrhoiisrho}.
\end{equation}
%Thus each $G_i$ is $\rho_i$-close to the class $\mathcal F$,
The idea now is to learn each of the $G_i$'s separately using the agnostic learner for $\fcal$.
We will view $g$ as a mixture of $k$ distributions
$G_1,G_2,\dots,G_k$.

For proving Theorem~\ref{thm:main}, we will use the following theorem on learning finite classes of distributions,
which immediately follows from \cite[Theorem~6.3]{devroye_book} and a standard Chernoff bound.
\begin{theorem}\label{thm:candidates}
Suppose we are given $M$ candidate distributions $f_1,\dots,f_M$ and we have access to i.i.d.\ samples from an unknown distribution $g$.
Then there exists an algorithm that given the $f_i$'s and  $\eps>0$, takes 
$\log (3M^2/\delta)/2\eps^2$ samples from $g$, and with probability  $\geq 1-\delta/3$ outputs an index $j\in[M]$ such that 
\[
\|f_j-g\|_1 \leq 3 \min_{i\in[M]} \|f_i-g\|_1 + 4\eps \:.
\]
\end{theorem}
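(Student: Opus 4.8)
The plan is to prove Theorem~\ref{thm:candidates} in two stages: first invoke the cited Scheffé-estimate result \cite[Theorem~6.3]{devroye_book}, which compares candidate densities in a pairwise fashion and selects a good one in terms of \emph{population} (true) distances, and then argue that the empirical version of this procedure---which is what we can actually run given only samples---succeeds with high probability once we control the deviation between empirical and population quantities via a Chernoff bound. The factor $3$ and the additive $4\eps$ in the conclusion should emerge from combining the factor from Devroye--Lugosi's minimum-distance estimate with the slack introduced by estimating the relevant integrals from the sample.

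\textbf{Setting up the comparisons.}
First I would recall the structure of the Scheffé/minimum-distance estimate. For each pair of candidates $f_i, f_j$, one considers the Scheffé set $A_{ij} = \{x : f_i(x) > f_j(x)\}$, and the key observation is that $\|f_i - g\|_1$ can be related to how well the \emph{measures} $f_i(A_{ij})$ track $g(A_{ij})$ across all competing sets. Concretely, \cite[Theorem~6.3]{devroye_book} guarantees that if we select an index based on the true values $g(A_{ij})$, the selected $f_j$ satisfies $\|f_j - g\|_1 \le 3\min_i \|f_i - g\|_1$. The point is that the selection rule depends on $g$ only through the $\binom{M}{2}$ numbers $\{g(A_{ij})\}$, each of which is a probability of a fixed (data-independent, since the $f_i$ are given) event and hence is exactly the kind of quantity that can be estimated from an i.i.d.\ sample.

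\textbf{Replacing $g$ by its empirical measure.}
The second step is the Chernoff bound. Let $\hat g_n$ denote the empirical distribution of $n = \log(3M^2/\delta)/2\eps^2$ samples, and for each ordered pair let $\hat g_n(A_{ij})$ estimate $g(A_{ij})$. By Hoeffding/Chernoff, for a single set $\Pr[\,|\hat g_n(A_{ij}) - g(A_{ij})| > \eps\,] \le 2\exp(-2n\eps^2)$; a union bound over the at most $M^2$ relevant sets (accounting for both orderings of each pair) gives failure probability at most $M^2 \cdot 2\exp(-2n\eps^2)$. Plugging in the stated value of $n$ makes this at most $\delta/3$, which is exactly the claimed confidence. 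On the complementary good event, every $g(A_{ij})$ is known to within $\eps$, so running the minimum-distance rule with the empirical estimates in place of the true measures perturbs each comparison by at most $\eps$; propagating this through the selection argument of \cite[Theorem~6.3]{devroye_book} converts the clean bound $3\min_i\|f_i-g\|_1$ into $3\min_i\|f_i-g\|_1 + 4\eps$, where the additive $4\eps$ absorbs the accumulated estimation error (the precise constant tracking through the triangle-inequality bookkeeping in Devroye--Lugosi's proof).

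\textbf{The main obstacle.}
I expect the delicate part to be exactly the propagation of the $\eps$-errors through the Scheffé selection to land the clean additive constant $4\eps$ rather than, say, $2\eps$ or $6\eps$. The population statement of \cite[Theorem~6.3]{devroye_book} is used as a black box, so the real work is verifying that substituting empirically estimated set-measures (each off by at most $\eps$) into that theorem's selection criterion degrades its guarantee additively and by precisely the stated amount. This requires opening up how the selected index is defined in terms of the pairwise comparisons and checking that a uniform $\eps$-perturbation of all $g(A_{ij})$ shifts the decision statistic by a controlled multiple of $\eps$; the factor $4$ should then fall out of the same case analysis that produces the factor $3$, rather than requiring any genuinely new idea. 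Everything else---the Chernoff concentration and the union bound---is routine.
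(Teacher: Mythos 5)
Your proposal takes essentially the same route as the paper, whose entire proof of this theorem is the remark that it ``immediately follows from \cite[Theorem~6.3]{devroye_book} and a standard Chernoff bound.'' Two small corrections. First, the ``main obstacle'' you anticipate---re-opening the Devroye--Lugosi proof to propagate $\eps$-perturbations through the selection rule and recover the constant $4$---does not arise: their Theorem~6.3 is already stated in empirical form, guaranteeing that the selected candidate satisfies $\|f_j-g\|_1 \leq 3\min_{i}\|f_i-g\|_1 + 4\Delta$, where $\Delta$ is the uniform deviation between the empirical and true measures over the Scheff\'e sets; so once concentration gives $\Delta\leq\eps$ on the good event, the additive $4\eps$ is immediate and the black box never needs to be opened. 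Second, your union-bound arithmetic is off by a factor of two: with $M^2$ sets each failing with probability $2\exp(-2n\eps^2)=2\delta/(3M^2)$, you get total failure probability $2\delta/3$, not the claimed $\delta/3$; to land the stated confidence you should count only the $\binom{M}{2}\leq M^2/2$ unordered pairs (the pairwise comparison between $f_i$ and $f_j$ uses the single Scheff\'e set $\{x: f_i(x)>f_j(x)\}$, and complementary sets have identical deviations), which restores exactly the $\delta/3$ bound.
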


We now describe an algorithm that with probability $\geq 1-\delta$ outputs a distribution with $L_1$ distance $13\eps + 3C\rho$ to $g$ (the error parameter is $13\eps$ instead of $\eps$ just for convenience of the proof; it is clear that this does not change the order of magnitude of sample complexity).
The algorithm, whose pseudocode is shown in Figure~\ref{fig:alg}, has two main steps.
In the first step we generate a set of candidate distributions, such that at least one of them is $(3\eps+\rho)$-close to $g$ in $L_1$ distance.
These candidates are of the form $\sum_{i=1}^{k} \widehat{w}_i \widehat{G}_i$, where the $\widehat{G}_i$'s are extracted from samples and are estimates for the real components $G_i$, 
and the $\widehat{w}_i$'s come from a fixed discretization of $\Delta_k$, and are estimates for the real mixing weights $w_i$.
In the second step, we use Theorem~\ref{thm:candidates} to obtain a distribution that is $(13\eps+3C\rho)$-close  to $g$.

\begin{figure}
%\begin{tcolorbox}
%0. Let $\mathcal C = \emptyset$. %(set of candidate distributions)\\
%1. Let $S$ be a set of i.i.d.\ generated samples of size
%$s$ (see (\ref{s_def})).\\
%2. Let $\widehat{W}$ be an $(\eps/k)$-cover for $\Delta_k$ in $\ell_{\infty}$ distance with
%cardinality $\leq (1+k/\eps)^k$.\\
%3. For each $A:S\to[k+1]$ and each
%$(\widehat{w}_1,\dots,\widehat{w}_k)\in \widehat{W}$ do:\\
%\phantom{space}4. For each $i\in[k]$, provide $A^{-1}(i)$ to the $\mathcal F$-learner, and let $\widehat{G}_i$ be its output.\\
%\phantom{space}5. Add the candidate distribution $ \sum_{i\in [k]} \widehat{w}_i \widehat{G}_i $ to $\mathcal C$.\\
%6. Apply the algorithm of Theorem~\ref{thm:candidates} to $\mathcal C$ and output its result.
%\end{tcolorbox}

\begin{tcolorbox}
Input: $k, \epsilon, \delta$ and an iid sample $S$\\
0. Let $\widehat{W}$ be an $(\epsilon/k)$-cover for $\Delta_k$ in $\ell_{\infty}$ distance.\\
1. $\mathcal C = \emptyset$ (set of candidate distributions)\\
2. For each $(\widehat{w}_1,\dots,\widehat{w}_k)\in \widehat{W}$ do:\\
\phantom{aa}3. For each possible partition of $S$ into \\
\phantom{aa aaaaaa}$A_1, A_2, ...,A_{k}$:\\
\phantom{aaaa}4. Provide $A_i$ to the $\mathcal F$-learner, and let $\widehat{G}_i$ \\\phantom{aaaa aaaaaa}be its output.\\
\phantom{aaaa}5. Add the candidate distribution \\ \phantom{aaaa aaaaaa}$ \sum_{i\in [k]} \widehat{w}_i \widehat{G}_i $ to $\mathcal C$.\\
6. Apply the algorithm for finite classes (Theorem~\ref{thm:candidates}) to $\mathcal C$ and output its result.
\end{tcolorbox}

\caption{Algorithm for learning the mixture class $\mathcal F^k$}
\label{fig:alg}
\end{figure}

We start with describing the first step.
We take
\begin{equation}
\label{s_def}
s=
\max \left \{
\frac
{2k\lambda(\mathcal F, \delta/3k)}
{ \eps^{\alpha} },
\frac{16 k \log(3k/\delta)}{\eps}
\right\}
\end{equation}
i.i.d.\ samples from $g$.
Let $S$ denote the set of generated points.
Note that $\lambda(\mathcal F,\delta)=\Omega(\log(1/\delta))$ implies 
\[s=O({k\lambda (\mathcal F, \delta/3k) }\times {\epsilon^{-\alpha}}).\]

Let $\widehat{W}$ be an $\eps/k$-cover for $\Delta_k$ in $\ell_{\infty}$ distance of cardinality 
$(k/\eps+1)^k$.
That is, for any $x\in \Delta_k$ there exists $w\in\widehat{W}$ such that $\|w-x\|_{\infty}\leq\eps/k$.
This can be obtained from a grid in $[0,1]^k$ of side length $\eps/k$, which is an $\eps/k$-cover for $[0,1]^k$, and projecting each of its points onto $\Delta_k$.

By an \emph{assignment}, we mean a function
$A:S\to [k]$.
The role of an assignment is to ``guess'' each sample point is coming from which component, by mapping them to a component index.
For each pair $(A,(\widehat{w}_1,\dots,\widehat{w}_k))$, where
$A$ is an assignment and $(\widehat{w}_1,\dots,\widehat{w}_k)\in \widehat{W}$,
we generate a candidate distribution as follows:
let $A^{-1}(i)\subseteq S$ be those sample points that are assigned to component $i$.
For each $i\in[k]$, we provide the set $A^{-1}(i)$ of samples to our $\mathcal F$-learner, and the learner  provides us with a distribution $\widehat{G}_i$.
We add the distribution $\sum_{i\in [k]} \widehat{w}_i \widehat{G}_i $
to the set of candidate distributions.

\begin{lemma}
\label{somecandidateisclose}
With probability $\geq 1-2\delta/3$, at least one of the generated candidate distributions is $(3\eps+C\rho)$-close to $g$.
\end{lemma}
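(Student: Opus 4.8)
The plan is to exhibit one particular candidate distribution that is $(3\eps+C\rho)$-close to $g$ with the required probability; since the algorithm enumerates all candidates, producing such a good one somewhere in $\mathcal C$ is all that is needed. Using the mixture representation $g=\sum_{i\in[k]}w_iG_i$ from Lemma~\ref{lem:projection}, I would view each of the $s$ sample points as drawn by first picking a latent component index $i$ with probability $w_i$ and then sampling from $G_i$. This induces a ``true'' assignment $A^*$ sending each point to its generating component, and $A^*$ is one of the assignments the algorithm tries (even though the algorithm has no access to the latent indices). Conditioned on $A^*$, the points in $A^{*-1}(i)$ are i.i.d.\ from $G_i$. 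I would then pair $A^*$ with the cover point $\widehat w\in\widehat W$ satisfying $\|\widehat w-w\|_\infty\le\eps/k$ and analyze the associated candidate $\sum_i\widehat w_i\widehat G_i$.

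The first move is the triangle inequality
\[
\Big\|\sum_i\widehat w_i\widehat G_i-g\Big\|_1\le\sum_i|\widehat w_i-w_i|+\sum_iw_i\|\widehat G_i-G_i\|_1 .
\]
The first sum is at most $k\cdot\eps/k=\eps$ by the choice of cover. For the second sum I would split the components into \emph{heavy} ones with $w_i\ge\eps/(2k)$ and \emph{light} ones. For a light component the trivial bound $\|\widehat G_i-G_i\|_1\le2$ suffices, and since the total light weight is below $k\cdot\eps/(2k)=\eps/2$, the combined light contribution is at most $\eps$.

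The crux is the heavy components. For each heavy $i$ a multiplicative Chernoff bound gives $|A^{*-1}(i)|\ge w_is/2$ with probability $\ge1-\delta/(3k)$: the threshold $w_i\ge\eps/(2k)$ together with the second term of~\eqref{s_def} is calibrated precisely so that $w_is\ge 8\log(3k/\delta)$, hence $\exp(-w_is/8)\le\delta/(3k)$. Given at least this many i.i.d.\ samples from $G_i$, the $C$-agnostic $\fcal$-learner (run with confidence $\delta/(3k)$) outputs $\widehat G_i$ with $\|\widehat G_i-G_i\|_1\le C\rho_i+\eps_i$, where $\eps_i=(\lambda(\fcal,\delta/3k)/(w_is/2))^{1/\alpha}$, again with probability $\ge1-\delta/(3k)$. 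Substituting the first term of~\eqref{s_def} yields $\eps_i\le\eps/(w_ik)^{1/\alpha}$, so that
\[
\sum_{i\text{ heavy}}w_i\eps_i\le\frac{\eps}{k^{1/\alpha}}\sum_iw_i^{1-1/\alpha}\le\frac{\eps}{k^{1/\alpha}}\cdot k^{1/\alpha}=\eps ,
\]
the last inequality being the concavity (Jensen/power-mean) bound $\sum_iw_i^{1-1/\alpha}\le k^{1/\alpha}$, valid since $1-1/\alpha\in[0,1)$. Combined with $\sum_iw_i\rho_i=\rho$ from~\eqref{sumrhoiisrho}, the heavy contribution is at most $C\rho+\eps$, and adding the three pieces gives total error $\eps+(C\rho+\eps)+\eps=C\rho+3\eps$.

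Finally I would assemble the failure probabilities: a union bound over the at most $k$ Chernoff events and the at most $k$ learner events gives total failure probability at most $2k\cdot\delta/(3k)=2\delta/3$, matching the claim. The step I expect to be most delicate is the per-component sample budgeting, namely choosing the heavy/light threshold and the two terms of $s$ so that the concavity bound converts the $1/\alpha$-power savings into a clean $\eps$ while simultaneously making the light components negligible and guaranteeing every heavy component enough samples; lining up all these constants with the prescribed value of $s$ in~\eqref{s_def} is the real content of the lemma.
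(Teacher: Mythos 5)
Your proposal is correct and follows essentially the same route as the paper's proof: the same distinguished candidate (true latent assignment paired with the nearest cover point), the same Chernoff-plus-union-bound argument for per-component sample counts, the same per-component error $\eps_i$ and Jensen/concavity bound, and the same three-way error split summing to $3\eps+C\rho$. The only cosmetic difference is that you threshold light components at $w_i<\eps/(2k)$ while the paper uses $w_i\le 8\log(3k/\delta)/s$; given the second term in the definition of $s$ these calibrations coincide, so this is the same argument.
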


Before proving the lemma, we show that it implies our main result, Theorem~\ref{thm:main}.
By the lemma, we obtain a set of candidates such that at least one of them is $(3\eps+C\rho)$-close to $g$ (with failure probability  $ \leq  2\delta/3$).
This step takes
$s=
O({k\lambda (\mathcal F, \delta/3k) }\times {\epsilon^{-\alpha}})$ many samples.
Then, we apply Theorem~\ref{thm:candidates} to output one of those candidates that is $(13\eps+3C\rho)$-close to $g$ (with failure probability $\leq \delta/3$), therefore using $\log(3M^2/\delta)/2\eps^2$ additional
samples. 
Note that the  number of generated candidate distributions is 
$M = k^s \times (1+k/\eps)^k$.
Hence, in the second step of our algorithm,
we take 
\begin{align*}
\log(3M^2/\delta)/2\eps^2
=
O\left(\frac{\lambda (\mathcal F, \delta/3k) k\log k}{\epsilon^{\alpha+2}}\right)
=
O\left(\frac{m_{\mathcal F}(\eps, \delta/3k) k\log k}{\epsilon^{2}}\right)
\end{align*}
additional samples.
The proof is completed noting the total failure probability is at most $\delta$ by the union bound.

We now prove Lemma~\ref{somecandidateisclose}.
We will use the following concentration inequality, which holds for any binomial random variable $X$ (see \cite[Theorem~4.5(2)]{Mitzenmacher}):
\begin{equation}
\label{chernoffhalf}
\Pr\{X < \mathbf{E} X/2\} \leq \exp(-\mathbf{E} X/8)\:.
\end{equation}
Say a component $i$ is \emph{negligible}
if 
\[ w_i \leq \frac{8 \log(3k/\delta)}{s}
\]
Let $L\subseteq[k]$ denote the set of negligible components.
Let $i$ be a non-negligible component.
Note that, the number of points coming from component $i$
is binomial with parameters 
$s$ and $w_i$ and thus has mean 
$ s w_i$,
so (\ref{chernoffhalf})
implies that, with probability at least $1-\delta/3k$, $S$ contains at least 
$w_is/2 $ points from $i$.
Since we have $k$ components in total, the union bound implies that,
with probability at least $1-\delta/3$,
uniformly for all  $i\notin L$, 
$S$ contains at least 
$w_is/2 $ points from component $i$.

Now consider the pair $(A,(\widehat{w}_1,\dots,\widehat{w}_k))$ such that $A$ assigns samples to their correct indices,
and has the property that $|\widehat{w}_i-w_i|\leq\eps/k$ for all $i\in[k]$.
We claim that the resulting candidate distribution is $(3\eps+C\rho)$-close to $g$.

Let $\widehat{G}_1,\dots,\widehat{G}_k$ be the distributions provided by the learner.
For each $i\in[k]$ define 
\[
\eps_i \coloneqq \left ( \frac
{2\lambda(\mathcal F, \delta/3k)}
{ w_is }
\right)^{1/\alpha}
\]
For any $i\notin L$,
since there exists at least 
$ w_is/2 $ samples for component $i$,
and since 
\[
 w_is/2 = \lambda(\mathcal F, \delta/3k) \eps_i^{-\alpha} = m_{\mathcal F} (\eps_i, \delta/3k)\:,
\]
we are guaranteed that
$\|\widehat{G}_i-G_i\|_1\leq C\rho_i + \eps_i$ with probability $1-\delta/3k$
(recall that each $G_i$ is $\rho_i$-close to the class $\fcal$).
Therefore,
$\|\widehat{G}_i-G_i\|_1\leq C\rho_i+\eps_i$ holds uniformly over all $i\notin L$, with probability $\geq 1-\delta/3$.
Note that since $\alpha\geq1$, the function $w_i^{1-1/\alpha}$ is concave in $w_i$, so by Jensen's inequality we have
\[
\sum_{i\in [k]} w_i^{1-1/\alpha}
\leq k \left ( (\sum_{i\in [k]} w_i / k)^{1-1/\alpha}\right)
= k^{1/\alpha}\:,
\]
hence
\begin{align*}
\sum_{i\notin L}
w_i \eps_i =
\left ( \frac
{2\lambda(\mathcal F, \delta/3k)}
{s}
\right)^{1/\alpha}
\sum_{i\notin L} w_i^{1-1/\alpha}
 \leq
\left ( \frac
{2k\lambda(\mathcal F, \delta/3k)}
{s }
\right)^{1/\alpha}.
\end{align*}

Also recall from (\ref{sumrhoiisrho}) that $\sum_{i \in[k]}w_i \rho_i \leq \rho$.
Proving the lemma is now a matter of careful applications of the triangle inequality:
\begin{align*}
\left\|  \sum_{i\in [k]} \widehat{w}_i \widehat{G}_i 
- g\right\|_1
&=
\left\| \sum_{i\in [k]} \widehat{w}_i \widehat{G}_i 
- 
\sum_{i\in [k]}  w_i G_i
\right\|_1\\
&
\leq
\left\|\sum_{i\in [k]} {w}_i (\widehat{G}_i 
-  G_i)
\right\|_1 
+
\left\|\sum_{i\in [k]} (\widehat{w}_i-w_i) \widehat{G}_i
\right\|_1
\\
& \leq
\left\|\sum_{i\in L}{w_i} (\widehat{G}_i 
-  G_i)
\right\|_1 
+
\left\|\sum_{i\notin L}{w_i} (\widehat{G}_i 
-  G_i)
\right\|_1 
 +
\sum_{i\in [k]} |\widehat{w}_i-w_i| \left\|\widehat{G}_i
\right\|_1
\\
& \leq
2\sum_{i\in L}{w_i}  
+
\sum_{i\notin L}{w_i} (\eps_i + C\rho_i)
+
\sum_{i\in [k]} \eps/k \times 1
\\
&\leq
2k \times \frac{8 \log(3k/\delta)}{s}
+
\left ( \frac
{2k\lambda(\mathcal F, \delta/3k)}
{ s }
\right)^{1/\alpha}
 + C \rho + \eps
\\
&\leq \eps+\eps+\eps + C \rho\:,
\end{align*}
where for the last inequality we used the definition of $s$ in (\ref{s_def}).
This completes the proof of  Lemma~\ref{somecandidateisclose}.

% #####################################
\section{Learning Mixtures of Gaussians}
Gaussian Mixture Models (GMMs) are probably the most widely studied mixture classes with numerous applications; yet, the sample complexity of learning this class is not fully understood, especially when the number of dimensions is large. In this section, we will show that our method for learning mixtures can improve the state of the art for learning GMMs in terms of sample complexity. In the following, $\mathcal{N}_d(\mu, \Sigma)$ denotes a Gaussian density function defined over $\mathbb{R}^d$, with mean $\mu$ and covariance matrix $\Sigma$.

\subsection{Mixture of Axis-Aligned Gaussians}
\label{sec:axisaligned}
A Gaussian is called \emph{axis-aligned} if its covariance matrix $\Sigma$ is diagonal. 
The class of axis-aligned Gaussian Mixtures is an  important special case of GMMs that is thoroughly studied in the literature (e.g.~\cite{axis_aligned}).

\begin{theorem}{\label{thm:AAGaussianUpperBound}}
Let $\mathcal F$ denote the class of $d$-dimensional axis-aligned Gaussians.
Then $\mathcal F$ is 3-agnostic PAC-learnable with 
$m_{\mathcal F}^3(\eps,\delta)=O((d+\log(1/\delta))/{\epsilon ^ 2})$.
\end{theorem}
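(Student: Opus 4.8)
The plan is to invoke the standard VC-dimension machinery for density estimation: by Theorem~\ref{thm:distributionVC}, it suffices to bound the VC dimension of the Yatracos class (Definition~\ref{def_yatrocas}) associated with $\mathcal F$ and show it is $O(d)$; this immediately yields a $3$-agnostic learner with sample complexity $O((d+\log(1/\delta))/\eps^2)$, the constant $3$ being exactly what the minimum-distance (Yatracos) estimator delivers. So the whole argument reduces to a VC-dimension computation for the class of sets
\[
\mathcal A = \left\{ \{x\in\mathbb R^d : f(x) > h(x)\} : f,h\in\mathcal F,\ f\neq h \right\}.
\]

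First I would exploit the product structure of axis-aligned Gaussians. Writing $f=\mathcal N_d(\mu,\Sigma)$ and $h=\mathcal N_d(\mu',\Sigma')$ with $\Sigma=\diag(\sigma_1^2,\dots,\sigma_d^2)$ and $\Sigma'=\diag(\sigma_1'^2,\dots,\sigma_d'^2)$, the condition $f(x)>h(x)$ is equivalent to $\log f(x)-\log h(x)>0$, and a direct expansion shows
\[
\log f(x)-\log h(x) = \sum_{j=1}^d\left( a_j x_j^2 + b_j x_j\right) + c
\]
for real coefficients $a_j,b_j,c$ depending on the parameters. The crucial point — and the reason the axis-aligned case is cheaper than the general one — is that no cross terms $x_ix_j$ appear, precisely because both covariance matrices are diagonal.

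Hence every set in $\mathcal A$ has the form $\{x: p(x)>0\}$ where $p$ lies in the single fixed real vector space $V$ spanned by the $2d+1$ functions $\{1,x_1,\dots,x_d,x_1^2,\dots,x_d^2\}$. I would then apply the standard Dudley-type bound: the family of positivity sets $\{\{p>0\}:p\in V\}$ cut out by a $D$-dimensional vector space of functions has VC dimension at most $D$. With $D=2d+1$ this gives $\mathrm{VC}(\mathcal A)\le 2d+1=O(d)$, and Theorem~\ref{thm:distributionVC} completes the proof.

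The main obstacle is organizational rather than deep: one must verify that the Yatracos class really sits inside the sign-sets of one fixed $(2d+1)$-dimensional space \emph{uniformly} over all pairs $f,h$ — including degenerate boundaries where $\sigma_j=\sigma_j'$ (so $a_j=0$) or where the inequality is non-strict — and cite the Dudley lemma in precisely the form needed (the strict/non-strict and sign-flip variants share the same VC bound by complementation). Everything else is a routine quadratic expansion. For general, non-axis-aligned Gaussians the same scheme applies, but $V$ must additionally contain the $\binom{d}{2}$ cross terms $x_ix_j$, raising its dimension — and hence the sample complexity — to $O(d^2)$, which foreshadows Theorem~\ref{gaussianupperbound}.
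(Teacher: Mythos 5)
Your proposal is correct and takes essentially the same route as the paper's own proof: reduce to a VC-dimension bound for the Yatracos class via Theorem~\ref{thm:distributionVC}, expand the log-ratio of two axis-aligned Gaussian densities as a quadratic with no cross terms lying in the fixed $(2d+1)$-dimensional span of $\{1,x_1,\dots,x_d,x_1^2,\dots,x_d^2\}$, and apply Dudley's theorem to get VC dimension $O(d)$. The only difference is cosmetic: you flag the strict/non-strict inequality and degenerate-coefficient issues explicitly, which the paper passes over silently.
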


We defer the proof of this result to the appendix. 
Combining this theorem with Theorem~\ref{thm:main} we obtain the following result:

\begin{theorem}
\label{axisalignedupperbound}
The class $\mathcal{F}^k$ of mixtures of $k$ axis-aligned Gaussians in $\mathbb{R}^d$ 
is 9-agnostic PAC-learnable with
sample complexity 
$m_{\mathcal F^k}^9(\epsilon,\delta) = O({kd\log k \log(k/\delta)}/{\epsilon ^ 4})$. Accordingly, it is also PAC-learnable in the realizable case with the same number of samples.
\end{theorem}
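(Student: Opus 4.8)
The plan is to obtain Theorem~\ref{axisalignedupperbound} as a direct instantiation of the black-box reduction in Theorem~\ref{thm:main}, using the single-Gaussian learner of Theorem~\ref{thm:AAGaussianUpperBound} as the base learner. First I would record that Theorem~\ref{thm:AAGaussianUpperBound} provides, for the class $\mathcal F$ of $d$-dimensional axis-aligned Gaussians, a $C$-agnostic PAC-learner with $C = 3$ and sample complexity $m_{\mathcal F}^3(\eps,\delta) = O((d + \log(1/\delta))/\eps^2)$. To feed this into Theorem~\ref{thm:main} I need to match the required form $m_{\mathcal F}^C(\eps,\delta) = \lambda(\mathcal F,\delta)/\eps^{\alpha}$: here I would set $\alpha = 2$ and $\lambda(\mathcal F,\delta) = O(d + \log(1/\delta))$.

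Next I would verify the one technical hypothesis of Theorem~\ref{thm:main}, namely $\lambda(\mathcal F,\delta) = \Omega(\log(1/\delta))$. This is immediate since $\lambda(\mathcal F,\delta) = \Omega(d + \log(1/\delta)) = \Omega(\log(1/\delta))$. With the hypotheses in place, Theorem~\ref{thm:main} yields a $3C = 9$-agnostic PAC-learner for $\mathcal F^k$ with sample complexity
\[
m_{\mathcal F^k}^{9}(\eps,\delta) = O\!\left(\frac{\lambda(\mathcal F,\delta/3k)\, k\log k}{\eps^{\alpha+2}}\right) = O\!\left(\frac{(d + \log(k/\delta))\, k\log k}{\eps^{4}}\right),
\]
where I substituted $\alpha = 2$ and $\lambda(\mathcal F,\delta/3k) = O(d + \log(3k/\delta)) = O(d + \log(k/\delta))$, absorbing the constant $3$ into the logarithm.

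The only remaining work is to simplify $(d + \log(k/\delta))$ into the product form $d\log(k/\delta)$ stated in the theorem. Under the standing assumption $\log(k/\delta) \ge 1$ (which holds for $\delta \le 1$ and $k \ge 3$), I would note $d \le d\log(k/\delta)$ and $\log(k/\delta) \le d\log(k/\delta)$ since $d \ge 1$, hence $d + \log(k/\delta) = O(d\log(k/\delta))$; this gives the claimed bound $O(kd\log k\log(k/\delta)/\eps^4)$. Finally, the realizable claim needs no separate argument: as observed right after the definition of agnostic learning, any $C$-agnostic PAC-learner is in particular a realizable PAC-learner with the same error parameter $\eps$, so the $9$-agnostic learner above also realizably PAC-learns $\mathcal F^k$ with the same sample complexity. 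The entire argument is essentially bookkeeping of parameters, so I do not anticipate a genuine obstacle; the only place to be careful is the logarithmic simplification, where one must confirm that the stated product form is an \emph{upper} bound on, and not a tighter claim than, the quantity the reduction actually produces.
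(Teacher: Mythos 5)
Your proposal is correct and is exactly the paper's argument: the paper obtains Theorem~\ref{axisalignedupperbound} by plugging the base learner of Theorem~\ref{thm:AAGaussianUpperBound} (with $C=3$, $\alpha=2$, $\lambda(\mathcal F,\delta)=O(d+\log(1/\delta))$, which trivially satisfies $\lambda=\Omega(\log(1/\delta))$) into the reduction of Theorem~\ref{thm:main}. Your bookkeeping, including the simplification $d+\log(k/\delta)=O(d\log(k/\delta))$ for $d\geq 1$ and $\log(k/\delta)\geq 1$, and the observation that a $9$-agnostic learner is automatically a realizable learner with the same sample complexity, fills in precisely the steps the paper leaves implicit.
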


This theorem improves the upper bound of 
$O(dk^9\log^2(d/\delta)/\eps^4)$
proved in \cite[Theorem~11]{spherical} for spherical Gaussians in the realizable setting. 
Spherical Gaussians are special cases of axis-aligned Gaussians in which all eigenvalues of the covariance matrix are equal, i.e., $\Sigma$ is a multiple of the identity matrix. 
The following minimax lower bound (i.e., worst-case on all instances) on the sample complexity of learning mixtures of spherical Gaussians is proved in the same paper.
%\hassan{I commented the remark and added minimax to the above sentence. I think emphasizing that this lower bound is instance based is more confusing, because then the reader asks what about your upper bounds... which we know they are also independent of the instance, but didn't elaborate... I think the right place to mention this (if we want) is when we are formulating PAC learning and sample complexity}

\begin{theorem}[Theorem~2 in \cite{spherical}]\label{thm:lowerbound}
The class $\mathcal{F}^k$ of mixtures of $k$ axis-aligned Gaussians in $\mathbb{R}^d$ 
in the realizable setting has 
$m_{\mathcal{F}^k}(\epsilon,1/2)=\Omega(dk/\epsilon^2)$.
\end{theorem}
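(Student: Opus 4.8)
The plan is to prove this minimax lower bound by the standard information-theoretic route: exhibit a large, well-separated, yet statistically indistinguishable family of mixtures inside $\mathcal{F}^k$, and invoke Fano's inequality. Concretely, I would fix $k$ ``anchor'' points $c_1,\dots,c_k\in\mathbb{R}^d$ that are pairwise very far apart (say at mutual distance $R$, with $R$ large), and consider mixtures $P=\frac1k\sum_{j=1}^k \mathcal{N}_d(c_j+v_j, I)$ with uniform weights $1/k$ and unit (hence axis-aligned) covariance. Each such $P$ lies in $\mathcal{F}^k$, so the family is realizable. The perturbation $v_j\in\mathbb{R}^d$ of the $j$-th mean is what carries the information: a large separation $R$ makes the $k$ components have essentially disjoint supports, so that both the total variation distance and the KL divergence between two mixtures split additively across the clusters, up to a vanishing error coming from the Gaussian tails --- controlling this overlap error is the first technical point, handled by taking $R$ large relative to $\epsilon$.

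For the inner (per-cluster) encoding I would let the $v_j$ range over a set $\mathcal{V}\subseteq\mathbb{R}^d$ of $2^{\Omega(d)}$ vectors of norm $\Theta(\epsilon)$ and pairwise distance $\Theta(\epsilon)$, obtained from a Gilbert--Varshamov code on $\{-1,+1\}^d$ rescaled by $\Theta(\epsilon/\sqrt{d})$. Two unit-covariance Gaussians whose means differ by $\Theta(\epsilon)$ are at total variation distance $\Theta(\epsilon)$ and KL divergence $\Theta(\epsilon^2)$, so each cluster contributes $\Omega(d)$ bits of information while keeping the per-cluster KL at $O(\epsilon^2)$.

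The difficulty is that a hypothesis is a tuple $(v_1,\dots,v_k)$, and two tuples that differ in only one cluster give mixtures at total variation distance only $\Theta(\epsilon)/k$, far below the separation Fano needs. I would fix this with an outer code: restrict the tuples to a code over the alphabet $\mathcal{V}$ (of size $2^{\Omega(d)}$), of length $k$ and minimum (cluster-)distance at least $k/2$, again via Gilbert--Varshamov, retaining $N=2^{\Omega(dk)}$ hypotheses. Then any two surviving mixtures differ in at least $k/2$ clusters, so after normalizing the constants their pairwise total variation distance is at least $2\epsilon$, while the pairwise KL divergence stays $O(\epsilon^2)$ because it is the $(1/k)$-average of the per-cluster KL divergences.

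With these three ingredients --- $N=2^{\Omega(dk)}$ hypotheses, pairwise total variation separation at least $2\epsilon$, and pairwise (hence average) KL at most $O(\epsilon^2)$ --- Fano's inequality finishes the argument. Any learner that outputs an $\epsilon$-approximation with probability at least $1/2$ can be post-processed (round $\hat g$ to the nearest hypothesis, using that the $P_b$ are $2\epsilon$-separated) into a test that identifies the hidden index, forcing $I(B;S)\ge(1-o(1))\log N=\Omega(dk)$; since for a sample $S$ of size $s$ we have $I(B;S)\le s\cdot\max_{b,b'}\mathrm{KL}(P_b\|P_{b'})=O(s\epsilon^2)$, this yields $s=\Omega(dk/\epsilon^2)$. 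The main obstacle I anticipate is purely technical: making the disjoint-support decomposition of total variation and KL rigorous with explicit tail bounds, and checking that the inner and outer Gilbert--Varshamov codes can be chosen simultaneously so that the count remains $2^{\Omega(dk)}$ while both the separation and the indistinguishability hold.
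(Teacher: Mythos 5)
This theorem is not proved in the paper at all: it is quoted directly from Theorem~2 of \cite{spherical}, so there is no internal proof to compare your attempt against. Your Fano-plus-packing argument is the standard route to such minimax bounds and is essentially the argument of the cited source: clusters placed far apart so the problem decouples, an inner packing of mean perturbations contributing $\Omega(d)$ bits per cluster, an outer code forcing a constant fraction of clusters to differ (which is exactly what is needed to keep the pairwise total variation separation at $\Theta(\epsilon)$ rather than $\Theta(\epsilon/k)$), and Fano's inequality with the KL-diameter bound on the mutual information; your construction uses identity covariances, hence lies in the axis-aligned class as required.

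Two small technical remarks. First, the KL upper bound does not need the approximate-disjointness decomposition at all: by joint convexity of the KL divergence, $\mathrm{KL}\bigl(\sum_j w_j P_j \,\big\|\, \sum_j w_j Q_j\bigr) \le \sum_j w_j \,\mathrm{KL}(P_j\|Q_j) = O(\epsilon^2)$ holds exactly, so the tail-overlap bookkeeping is only needed for the \emph{lower} bound on the total variation separation. Second, a Gilbert--Varshamov outer code of relative minimum distance $1/2$ has vanishing rate over a small alphabet (e.g., $d=1$ gives $|\mathcal{V}|=2$, where no such code of exponential size exists), so you should request minimum cluster-distance $k/4$, say, which changes only the constants. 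With these adjustments your proof goes through.
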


%\begin{remark}
%This is a minimax, i.e., worst-case lower bound: there \emph{exist} instances which require at least this many samples. \hassan{what else could have the reader assumed?}\abas{instance-wise lower bounds: \emph{for any instance} we need this many samples ... Luc told me about this distinction}
%\end{remark}

Therefore, our upper bound of Theorem~\ref{axisalignedupperbound} is optimal in terms of dependence on $d$ and $k$ (up to logarithmic factors) for axis-aligned Gaussians.

\subsection{Mixture of General Gaussians}

For general Gaussians, we have the following result.

\begin{theorem}{\label{thm:GaussianUpperBound}}
Let $\mathcal F$ denote the class of $d$-dimensional Gaussians.
Then, $\mathcal F$ is 3-agnostic PAC-learnable with 
$m_{\mathcal F}^3(\eps,\delta)=O((d^2+\log(1/\delta))/{\epsilon ^ 2})$.
\end{theorem}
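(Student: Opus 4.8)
The plan is to follow the same route I would use for the (deferred) proof of Theorem~\ref{thm:AAGaussianUpperBound}: realize the $3$-agnostic learner as the minimum-distance (Scheff\'e) estimate over $\fcal$, and control its sample complexity by bounding the VC-dimension of the associated Yatracos class (Definition~\ref{def_yatrocas}) and then invoking Theorem~\ref{thm:distributionVC}. Since a general $d$-dimensional Gaussian has $d + \binom{d+1}{2} = \Theta(d^2)$ free parameters, the target is to show that the Yatracos class has VC-dimension $O(d^2)$; together with Theorem~\ref{thm:distributionVC} this yields sample complexity $O((d^2 + \log(1/\delta))/\eps^2)$ and the constant $C=3$.

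The heart of the argument is this VC-dimension bound. A generic set in the Yatracos class has the form $\{x \in \mathbb{R}^d : \mathcal{N}_d(\mu_1,\Sigma_1)(x) > \mathcal{N}_d(\mu_2,\Sigma_2)(x)\}$. Taking logarithms, this inequality is equivalent to
\[
-\tfrac12 (x-\mu_1)^{\top} \Sigma_1^{-1}(x-\mu_1) + \tfrac12 (x-\mu_2)^{\top} \Sigma_2^{-1}(x-\mu_2) > \tfrac12 \log\frac{\det\Sigma_1}{\det\Sigma_2},
\]
so each Yatracos set is of the form $\{x : p(x) > 0\}$ for some real polynomial $p$ of degree at most $2$ in the coordinates of $x$. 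The real polynomials of degree $\leq 2$ in $d$ variables form a vector space of dimension $\binom{d+2}{2} = O(d^2)$. I would then appeal to the standard fact that if $V$ is a $D$-dimensional vector space of real-valued functions, then the class $\{\{x : h(x) > 0\} : h \in V\}$ has VC-dimension at most $D$. This gives VC-dimension $O(d^2)$ for the Yatracos class, exactly as needed.

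The main obstacle is really just this chain of observations: identifying the log-likelihood ratio of two Gaussians as a quadratic form and invoking the vector-space VC bound. Everything else is bookkeeping---plugging the $O(d^2)$ VC bound into Theorem~\ref{thm:distributionVC} and reading off both the sample complexity and the constant $C=3$. One minor subtlety I would double-check is that the degenerate cases (for instance $\Sigma_1 = \Sigma_2$, where $p$ collapses to an affine function) are already absorbed into the same degree-$\leq 2$ vector space, so they do not affect the bound.
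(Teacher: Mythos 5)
Your proposal is correct and follows essentially the same route as the paper's own proof: both reduce the statement to bounding the VC-dimension of the Yatracos class, identify each Yatracos set as the positivity region of a degree-$\leq 2$ polynomial (dimension $O(d^2)$ as a vector space), apply Dudley's theorem, and then invoke Theorem~\ref{thm:distributionVC} to obtain the constant $3$ and the $O((d^2+\log(1/\delta))/\eps^2)$ sample complexity. The only differences are cosmetic, e.g.\ the paper folds the determinant ratio into a single constant $\log(\alpha_2/\alpha_1)$ rather than writing it out.
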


We defer the proof of this result to the appendix. 
Combining this theorem with Theorem~\ref{thm:main}, we obtain the following result:

\begin{theorem}
\label{gaussianupperbound}
The class $\mathcal{F}^k$ of mixtures of $k$ Gaussians in $\mathbb{R}^d$ 
is 9-agnostic PAC-learnable with
sample complexity 
$m_{\mathcal F^k}^9(\epsilon,\delta) = O({kd^2\log k \log(k/\delta)}/{\epsilon ^ 4})$. Accordingly, it is also PAC-learnable in the realizable case with the same number of samples.
\end{theorem}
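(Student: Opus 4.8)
The final statement, Theorem~\ref{gaussianupperbound}, is an immediate corollary obtained by composing two results already in hand: Theorem~\ref{thm:GaussianUpperBound}, which establishes that the class $\mathcal{F}$ of $d$-dimensional Gaussians is $3$-agnostic PAC-learnable with $m_{\mathcal F}^3(\eps,\delta)=O((d^2+\log(1/\delta))/\epsilon^2)$, and Theorem~\ref{thm:main}, the general mixture-lifting result. The plan is therefore to verify that the hypotheses of Theorem~\ref{thm:main} are met by the base learner for $\mathcal{F}$, and then to read off the resulting bound for $\mathcal{F}^k$.

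First I would cast the sample complexity of Theorem~\ref{thm:GaussianUpperBound} into the form demanded by Theorem~\ref{thm:main}, namely $m_{\mathcal{F}}^C(\eps,\delta)=\lambda(\mathcal{F},\delta)/\eps^{\alpha}$. Here one takes $C=3$, $\alpha=2$, and $\lambda(\mathcal{F},\delta)=O(d^2+\log(1/\delta))$. I would then check the technical side condition $\lambda(\mathcal{F},\delta)=\Omega(\log(1/\delta))$, which holds trivially since $\lambda$ contains a $\log(1/\delta)$ term additively. With these identifications, Theorem~\ref{thm:main} applies directly and yields a $3C=9$-agnostic PAC-learner for $\mathcal{F}^k$.

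Next I would substitute into the sample complexity bound of Theorem~\ref{thm:main}, which reads
\[
m_{\mathcal{F}^k}^{3C}(\eps,\delta)=O\!\left(\frac{\lambda(\mathcal{F},\tfrac{\delta}{3k})\,k\log k}{\eps^{\alpha+2}}\right).
\]
Plugging in $\lambda(\mathcal{F},\delta/3k)=O(d^2+\log(3k/\delta))=O(d^2+\log(k/\delta))$ and $\alpha=2$ gives
\[
m_{\mathcal{F}^k}^{9}(\eps,\delta)=O\!\left(\frac{(d^2+\log(k/\delta))\,k\log k}{\eps^{4}}\right)
=O\!\left(\frac{kd^2\log k\,\log(k/\delta)}{\eps^{4}}\right),
\]
where the last simplification absorbs the $d^2$ and $\log(k/\delta)$ terms into a common $\log(k/\delta)$-weighted product (using that $\log(k/\delta)\geq 1$). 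This matches the claimed bound. The realizable case follows because a $9$-agnostic learner is in particular a realizable learner with the same $\eps$, as noted in the remarks following the agnostic learning definition.

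I anticipate no genuine obstacle in this argument, since it is purely a matter of assembling previously established pieces; the real content lives in Theorem~\ref{thm:main} and Theorem~\ref{thm:GaussianUpperBound}, both of which are taken as given. The only point requiring a moment of care is the algebraic combination of the additive $d^2+\log(k/\delta)$ factor into the multiplicative form $d^2\log(k/\delta)$ stated in the theorem; one should confirm that the stated bound is an upper bound for every regime of the parameters, which it is since $d^2+\log(k/\delta)\leq d^2\log(k/\delta)$ whenever $\log(k/\delta)\geq 1$ and $d\geq 1$, with the remaining small-$d$ or small-$\delta^{-1}$ cases absorbed into the $O(\cdot)$ constant.
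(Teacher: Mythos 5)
Your proposal is correct and follows exactly the paper's route: the paper likewise obtains Theorem~\ref{gaussianupperbound} by plugging the base learner of Theorem~\ref{thm:GaussianUpperBound} (with $C=3$, $\alpha=2$, $\lambda(\mathcal F,\delta)=O(d^2+\log(1/\delta))$) into Theorem~\ref{thm:main}. Your extra care in verifying the side condition $\lambda=\Omega(\log(1/\delta))$ and in justifying the absorption of the additive $d^2+\log(k/\delta)$ into the product form is sound and merely makes explicit what the paper leaves implicit.
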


This improves by a factor of $k^2$ the upper bound 
of $O(k^3d^2 \log k / \eps^4)$
in the realizable setting,
proved in~\cite[Theorem~A.1]{gaussian_mixture}. 

Note that Theorem \ref{thm:lowerbound} 
gives a lower bound of 
$\Omega(kd/{\epsilon ^ 2})$ for
$m_{\mathcal F^k}(\epsilon,\delta)$,
hence the dependence of Theorem~\ref{gaussianupperbound} on $k$ is optimal (up to  logarithmic factors). 
However, there is a factor of $d/\epsilon^2$  between the upper and lower bounds. 

\section{Mixtures of Log-Concave Distributions}

A probability density function over $\mathbb{R}^d$ is log-concave if its logarithm is a concave function. The following result about the sample complexity of learning log-concave distributions is the direct consequence of the recent work of \cite{logconcave}.

\begin{theorem} Let $\mathcal{F}$ be the class of distributions corresponding to the set of all log-concave pdfs over $\mathbb{R}^d$. Then $\mathcal{F}$ is $3$-agnostic PAC learnable using $m^{3}(\epsilon, \delta)=O((d/\eps)^{(d+5)/2}\log^2(1/\eps))$ samples.
\end{theorem}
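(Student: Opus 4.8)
The plan is to obtain this base-class result as a reduction to the structural analysis of log-concave densities in \cite{logconcave}, lifted to the robust setting by the minimum-distance (Scheff\'e) estimate already packaged as Theorem~\ref{thm:candidates}. Let $\mathcal{F}$ be the class of log-concave pdfs over $\mathbb{R}^d$. The quantity I would extract from \cite{logconcave} is a covering bound: for every $\eps>0$ there is a finite $\eps$-net $\mathcal{C}_\eps \subseteq \mathcal{F}$ for $\mathcal{F}$ in total variation distance whose log-cardinality (the metric entropy of $\mathcal{F}$) is $\widetilde{O}(d^{(d+5)/2}\eps^{-(d+1)/2})$. First I would record the elementary consequence that for an arbitrary target $g$, if $f^\star\in\mathcal{F}$ attains (up to $\eps$) the infimum $\opt(\mathcal{F},g)$, then the nearest element of $\mathcal{C}_\eps$ to $f^\star$ is within $\opt(\mathcal{F},g)+\eps$ of $g$ in $L_1$, by the triangle inequality.

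Next I would feed $\mathcal{C}_\eps$ as the candidate set to Theorem~\ref{thm:candidates}. Since $\mathcal{C}_\eps$ is guaranteed to contain a distribution that is $(\opt(\mathcal{F},g)+\eps)$-close to $g$, the estimate returns a hypothesis $\hat g$ satisfying $\|\hat g - g\|_1 \le 3\,\opt(\mathcal{F},g) + O(\eps)$, which is precisely a $3$-agnostic guarantee after rescaling $\eps$. The samples consumed are $O\!\left(\log(|\mathcal{C}_\eps|^2/\delta)/\eps^2\right) = O\!\left((\log|\mathcal{C}_\eps| + \log(1/\delta))/\eps^2\right)$; substituting the entropy bound and dividing by $\eps^2$ produces $d^{(d+5)/2}\eps^{-(d+1)/2}/\eps^2 = (d/\eps)^{(d+5)/2}$, and carrying the logarithmic factors gives the claimed $m^{3}(\eps,\delta)=O((d/\eps)^{(d+5)/2}\log^2(1/\eps))$, the additive $\log(1/\delta)$ term being dominated in the relevant regime. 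Note the learner is automatically proper, as every candidate lies in $\mathcal{F}$.

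I expect essentially all of the genuine mathematical content to be imported wholesale from \cite{logconcave}: constructing the net and controlling its size is the heart of the matter, and this is exactly where the exponential-in-$d$ behaviour originates. The main point I would have to be careful about is the metric in which \cite{logconcave} states its estimate: if the cover is built in squared Hellinger distance (the natural metric for log-concave estimation) rather than in $L_1$, I would first pass through the standard inequalities relating Hellinger and total variation distance to obtain an $L_1$-net of comparable size, affecting only constants and the hidden polynomial and logarithmic factors. A secondary verification is that the robustness constant is genuinely $3$ rather than larger; this holds because the factor of $3$ is intrinsic to Theorem~\ref{thm:candidates} and independent of the class being covered, exactly as discussed in the remarks following Theorem~\ref{thm:main}.
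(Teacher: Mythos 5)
Your reduction has a genuine gap at its very first step: the finite $\eps$-net $\mathcal{C}_\eps$ you want to extract from \cite{logconcave} does not exist. The class of all log-concave densities over $\mathbb{R}^d$ is closed under arbitrary translations and scalings, hence is not totally bounded in total variation, and its metric entropy is infinite at every scale. Concretely, the Gaussians $\mathcal{N}_d(\mu_j, I)$ with $\mu_j=(4j,0,\dots,0)$, $j\in\mathbb{N}$, are all log-concave and pairwise at $L_1$ distance about $1.9$, so any net at accuracy below $0.9$ must contain infinitely many elements; the same obstruction occurs at small scales (uniform densities on balls of radii $2^{-j}$ around a common center are pairwise at $L_1$ distance at least $1$). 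Your fallback to Hellinger distance does not help: Hellinger and total variation are comparable, and the same families are uniformly separated in both. Once the net is gone, there is nothing to feed into Theorem~\ref{thm:candidates}; that theorem is only ever usable with a finite, data-dependent candidate set produced by an algorithm (as in Figure~\ref{fig:alg}), never with a fixed global cover of a class like this one. The rest of your argument (the triangle-inequality bookkeeping and the factor $3$) is fine, but it rests entirely on the nonexistent object.

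For comparison, the paper does not prove this theorem by any covering argument; it imports it wholesale as the main upper bound of \cite{logconcave}, which is already agnostic with constant $3$. Internally, that result follows the route described in the paper's appendix: a minimum-distance (Yatracos) estimate whose sample complexity is controlled by a VC dimension --- a variant of Theorem~\ref{thm:distributionVC} appears as Lemma~6 of \cite{logconcave} --- combined with a structural result approximating every log-concave density by a member of a family whose Yatracos class has bounded VC dimension. VC dimension, unlike covering numbers, is unaffected by the affine non-compactness of the class, which is precisely why that argument goes through where yours stalls. Your exponent arithmetic (a scale $(1/\eps)^{(d+1)/2}$ from the approximation step plus $1/\eps^2$ from the statistical rate, giving $(d/\eps)^{(d+5)/2}$) does mirror the true source of the bound, but the quantity playing the role of your $\log|\mathcal{C}_\eps|$ must be a VC dimension, not a global metric entropy; the alternative fix --- building a cover only after a data-dependent affine standardization of the sample --- is a substantially harder argument, especially in the agnostic setting where the target need not be log-concave, and it is not what \cite{logconcave} does.
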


Using Theorem~\ref{thm:main}, we come up with the first result about the sample complexity of learning mixtures of log-concave distributions.

\begin{theorem}\label{logconcave}
The class of mixtures of $k$ log-concave distributions over $\mathbb{R}^d$ is $9$-agnostic PAC-learnable using $\widetilde{O}(d^{(d+5)/2}\eps^{-(d+9)/2}k)$ samples.
\end{theorem}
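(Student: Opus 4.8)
The plan is to derive this as an immediate consequence of the main reduction, Theorem~\ref{thm:main}, applied with $\mathcal{F}$ taken to be the class of all log-concave densities over $\mathbb{R}^d$. The theorem immediately preceding this statement supplies a $3$-agnostic PAC-learner for this $\mathcal{F}$, so the constant $C$ in Theorem~\ref{thm:main} equals $3$; feeding this into the reduction produces a $3C = 9$-agnostic learner for $\mathcal{F}^k$, which already accounts for the ``$9$-agnostic'' part of the claim. All that then remains is to track the sample complexity through the formula in Theorem~\ref{thm:main}.

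First I would recast the single-class bound $m^3_{\mathcal F}(\eps,\delta) = O((d/\eps)^{(d+5)/2}\log^2(1/\eps))$ in the shape $\lambda(\mathcal F,\delta)/\eps^\alpha$ demanded by Theorem~\ref{thm:main}. Reading off the exponent of $\eps$, I set $\alpha = (d+5)/2$ and $\lambda(\mathcal F,\delta) = O(d^{(d+5)/2}\log^2(1/\eps)\log(1/\delta))$, where the factor $d^{(d+5)/2}$ is the $\eps$-free part of $(d/\eps)^{(d+5)/2}$ and the extra $\log(1/\delta)$ is obtained by the standard confidence-amplification argument (running the constant-confidence learner $O(\log(1/\delta))$ times and selecting among the outputs). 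This rewriting also ensures the technical hypothesis $\lambda(\mathcal F,\delta) = \Omega(\log(1/\delta))$ of Theorem~\ref{thm:main} is satisfied.

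Then I would substitute into the bound of Theorem~\ref{thm:main}, namely $O\bigl(\lambda(\mathcal F,\delta/3k)\,k\log k / \eps^{\alpha+2}\bigr)$. The crucial arithmetic lives in the exponent of $\eps$: since $\alpha = (d+5)/2$, we have $\alpha + 2 = (d+9)/2$, which is precisely the exponent appearing in the target statement. Substituting $\lambda$ with confidence parameter rescaled to $\delta/3k$ gives $O\bigl(d^{(d+5)/2}\, k \log k \,\log^2(1/\eps)\,\log(3k/\delta) \,/\, \eps^{(d+9)/2}\bigr)$, and absorbing $\log k$, $\log^2(1/\eps)$, and $\log(3k/\delta)$ into the $\widetilde{O}$ notation leaves exactly $\widetilde{O}(d^{(d+5)/2}\eps^{-(d+9)/2}k)$.

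I do not expect a genuine mathematical obstacle: the result is a direct instantiation of the reduction, and the only place warranting care is bookkeeping. The mild subtlety is that the leading coefficient $\lambda$ carries a $\log^2(1/\eps)$ factor, an $\eps$-dependence that is slightly atypical for a quantity nominally depending only on $\delta$; I would verify that this factor is legitimately swept into the tilde and that it does not interfere with the $\Omega(\log(1/\delta))$ condition (it does not, since that condition concerns only the $\delta$-dependence). Confirming the correct rescaling of the confidence to $\delta/3k$ and the exponent computation $\alpha+2 = (d+9)/2$ completes the argument.
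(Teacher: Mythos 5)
Your overall route is exactly the paper's: Theorem~\ref{logconcave} is stated without a separate proof precisely because it is a direct instantiation of Theorem~\ref{thm:main} with the log-concave base learner of the preceding theorem, and your bookkeeping ($C=3$, $\alpha=(d+5)/2$, hence $\alpha+2=(d+9)/2$, confidence rescaled to $\delta/3k$) is the intended computation. Your handling of the $\log^2(1/\eps)$ factor is also acceptable at the paper's level of rigor: inside the proof of Theorem~\ref{thm:main} the base learner is only invoked with accuracy parameters $\eps_i \gtrsim \eps/k$, so these factors remain polylogarithmic in $k$ and $1/\eps$ and are legitimately absorbed by the $\widetilde{O}$.

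The one step that does not work as written is your confidence amplification. Running a constant-confidence $3$-agnostic learner $O(\log(1/\delta))$ times and ``selecting among the outputs'' requires a selection procedure, and the available one (Theorem~\ref{thm:candidates}, i.e., \cite[Theorem~6.3]{devroye_book}) multiplies the error of the best candidate by $3$: the amplified base learner only satisfies $\|\hat g - g\|_1 \le 3\left(3\opt(\fcal,g) + \eps\right) + 4\eps = 9\opt(\fcal,g) + O(\eps)$, i.e., it is $9$-agnostic, not $3$-agnostic. Feeding $C=9$ into Theorem~\ref{thm:main} then yields a $27$-agnostic mixture learner, which is weaker than the claimed $9$-agnostic guarantee. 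The fix is to get the $\delta$-dependence from inside the base learner rather than by external repetition-plus-selection: the learner of \cite{logconcave} is a minimum-distance (Yatrocas-type) estimate whose guarantee, in the form of Theorem~\ref{thm:distributionVC}, is $3\opt(\fcal,p) + \alpha\sqrt{(v+\log(1/\delta))/m}$, so the $\log(1/\delta)$ term appears additively under the square root with the constant $3$ intact. With $\lambda(\fcal,\delta)$ defined accordingly (satisfying $\lambda = \Omega(\log(1/\delta))$), Theorem~\ref{thm:main} gives $3C=9$ directly, and the rest of your calculation goes through unchanged.
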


\section{Conclusions}
We studied PAC learning of classes of distributions that are in the form of mixture models, and proposed a generic approach for learning such classes in the cases where we have access to a black box method for learning a single-component distribution. We showed that by going from one component to a mixture model with $k$ components, the sample complexity is multiplied by a factor of at most $(k\log^2 k)/\epsilon^2$.

Furthermore, as a corollary of this general result, we provided upper bounds for the sample complexity of learning GMMs and axis-aligned GMMs---$O({kd^2\log^2 k}/{\epsilon ^ 4})$ and $O({kd\log^2 k}/{\epsilon ^ 4})$ respectively. Both of these results improve upon the state of the art in terms of dependence on $k$ and $d$.

It is worthwhile to note that for the case of GMMs, the dependence of our bound is $1/ \epsilon^4$. Therefore, proving an upper bound of $kd^2/\epsilon^2$ remains open.

Also, note that our result can be readily applied to the general case of mixtures of the exponential family. Let $\mathcal{F}_d$ denote the $d$-parameter exponential family. Then the VC-dimension of the corresponding Yatrocas class (see Definition~\ref{def_yatrocas}) is $O(d)$ (see Theorem 8.1 in \cite{devroye_book}) and therefore by Theorem \ref{thm:distributionVC}, the sample complexity of PAC learning $\mathcal{F}_d$ is $O(d/\epsilon^2)$. Finally, applying Theorem \ref{thm:main} gives a sample complexity upper bound of $\widetilde{O}(k d/ \epsilon^4)$ for learning $\mathcal{F}_d^k$.% #####################################

\paragraph{Addendum.}
After an earlier version of this work was presented in AAAI 2018 (and appeared on arXiv), we obtained improved results for learning mixtures of Gaussians: the class of mixtures of $k$ axis-aligned
Gaussians in $\mathbb{R}^d$ is agnostic PAC-learnable with sample
complexity $\widetilde{O}(kd/\eps^2)$,
and 
the class of mixtures of $k$ general
Gaussians in $\mathbb{R}^d$ is agnostic PAC-learnable with sample
complexity $\widetilde{O}(kd^2/\eps^2)$,
The proof uses novel techniques,
see~\cite{2017-abbas} for details.

\appendix

\section{Proofs of Theorems \ref{thm:AAGaussianUpperBound} and \ref{thm:GaussianUpperBound}}

We follow the general methodology of \cite{devroye_book} to prove upper bounds on the sample complexity of learning Gaussian distributions. The idea is to first connect distribution learning to the VC-dimension of a class of a related set system (called the Yatrocas class of the corresponding distribution family), and then provide upper bounds on VC-dimension of this system.
Our Theorem \ref{thm:distributionVC} gives an upper bound for the sample complexity of agnostic learning, given an upper bound for the VC-dimension of the Yatrocas class.
We remark that a variant of this result, without explicit dependence on the failure probability, is proved
implicitly in \cite{onedimensional} and 
also appears explicitly in \cite[Lemma~6]{logconcave}.

% Let $F$ be a class of real-valued functions. Then $\Delta F$ is defined as the set of differences of elements of $F$, that is 
% \[\Delta F=\{f_1-f_2 : f_1, f_2 \in F \}\:.
% \]

\begin{definition}[$\mathcal{A}$-Distance] Let $\mathcal{A}\subset 2^X$ be a class of subsets of domain $X$. Let $p$ and $q$ be two probability distributions over $X$. Then the \emph{$\mathcal{A}$-distance} between $p$ and $q$ is defined as
$$
\|p-q \|_{\mathcal{A}} \coloneqq \sup_{A\in \mathcal{A}} |p(A) - q(A)|
$$
\end{definition}

%\begin{definition}[Total Variation Distance] Let $p$ and $q$ be two probability measures over $X$. Then total-variation distance between these measures is defined as follows.
%$$
%\|p-q \|_{TV} =  \sup_{A\in 2 ^ X} |p(A) - q(A)| = \frac{1}{2} \|p-q \|_1
%$$
%\end{definition}

\begin{definition}[Empirical Distribution] Let $S = \{x_i\}_{i=1}^{m}$ be a sequence of members of $X$. The \emph{empirical distribution} corresponding to this sequence is defined by $\hat{p}_S(x) =  \sum_{i=1}^{m} \frac{\mathbbm{1} \{x = x_i \}}{m}$. %Furthermore, let $p$ be a probability measure over $X$. Then, we denote by $\hat{p}_m$ the empirical distribution corresponding to a randomly (i.i.d.\) generated sequence of size $m$ from $p$.
\end{definition}

The following lemma is a well known refinement of the uniform convergence theorem, see, e.g.,~\cite[Theorem~4.9]{AB99}.

\begin{lemma}\label{lemma:vc}
Let $p$ be a probability distribution over $X$. 
%Let $\hat{p}_m$ be the corresponding empirical distribution function. 
Let $\mathcal{A} \subseteq 2 ^X$ and let $v$ be the VC-dimension of $\mathcal{A}$. Then, there exists universal positive constants $c_1,c_2,c_3$ such that
$$\mathbf{Pr}_{S\sim p^m} \{\|p-\hat{p}_S \|_{\mathcal{A}} \geq \eps \}\leq 
\exp(c_1 + c_2 v - c_3 m \eps^2)\:.$$
\end{lemma}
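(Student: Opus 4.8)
The plan is to decouple the claim into two standard ingredients: a bound on the \emph{expected} $\mathcal{A}$-distance between $p$ and the empirical measure $\hat p_S$, and a concentration inequality showing that this random quantity rarely exceeds its mean. Write $D(S) \coloneqq \|p-\hat p_S\|_{\mathcal A} = \sup_{A\in\mathcal A}|p(A)-\hat p_S(A)|$. First I would establish that $\mathbf{E}_{S\sim p^m}[D(S)] \le c_0\sqrt{v/m}$ for a universal constant $c_0$. The route here is symmetrization: introducing a ghost sample and Rademacher signs $\sigma_i$, one bounds $\mathbf{E}[D(S)]$ by a constant times the expected Rademacher average $\mathbf{E}\sup_{A}|\tfrac1m\sum_i \sigma_i \mathbbm{1}\{x_i\in A\}|$, and then controls the latter by Dudley's chaining integral against the $L_2(\hat p_S)$-covering numbers of the indicator class $\{\mathbbm{1}_A : A\in\mathcal A\}$. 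Haussler's bound gives covering numbers of order $(1/\tau)^{O(v)}$ for a class of VC-dimension $v$, so the entropy integral $\int_0^1\sqrt{\log N(\tau)/m}\,\mathrm d\tau$ converges and evaluates to $O(\sqrt{v/m})$.

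The second ingredient is concentration around the mean. The map $S\mapsto D(S)$ has the bounded-differences property: replacing a single sample point alters each $\hat p_S(A)$ by at most $1/m$, hence alters $D(S)$ by at most $1/m$. McDiarmid's inequality then yields $\mathbf{Pr}\{D(S) \ge \mathbf{E}[D(S)] + t\} \le \exp(-2mt^2)$ for every $t>0$.

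To assemble the stated form I would split on the size of $\eps$ relative to $\sqrt{v/m}$. If $\eps \ge 2c_0\sqrt{v/m}$, then $\eps - \mathbf{E}[D(S)] \ge \eps/2$, so the concentration bound with $t=\eps/2$ gives $\mathbf{Pr}\{D(S)\ge\eps\} \le \exp(-m\eps^2/2)$, already of the desired shape. If instead $\eps < 2c_0\sqrt{v/m}$, i.e.\ $m\eps^2 < 4c_0^2 v$, then the target right-hand side $\exp(c_1 + c_2 v - c_3 m\eps^2)$ exceeds $1$ once $c_1\ge 0$ and $c_2 \ge 4c_0^2 c_3$ are chosen, so the inequality holds trivially. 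Fixing $c_1,c_2,c_3$ to cover both cases finishes the argument.

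I expect the main obstacle to be the expectation bound of the first step, specifically obtaining the clean $\sqrt{v/m}$ rate with \emph{no} logarithmic factor in $m$. A naive symmetrization followed by Sauer--Shelah and a union bound over the growth function only delivers $O(\sqrt{v\log(m/v)/m})$, whose stray $\log m$ factor becomes a $v\log m$ term in the exponent and is incompatible with the stated form; indeed, taking $v=1$ and letting $\eps\to 0$ with $m\eps^2$ fixed, the Dvoretzky--Kiefer--Wolfowitz bound confirms that the clean exponential really is stronger than the union-bound estimate. Removing this factor is exactly what the chaining argument accomplishes, so the care lies in setting up the entropy integral and invoking Haussler's covering-number estimate rather than the crude union bound over the shattered sets.
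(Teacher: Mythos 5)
Your proof is correct, but there is nothing in the paper to compare it against: the paper does not prove this lemma, presenting it instead as ``a well known refinement of the uniform convergence theorem'' with a pointer to \cite[Theorem~4.9]{AB99}. What you have written is essentially the standard proof of that cited refinement, so the comparison is really between your self-contained argument and a black-box citation. Your decomposition is the right one: symmetrization plus Dudley's entropy integral evaluated against Haussler's universal covering-number bound $N(\tau)\le (1/\tau)^{O(v)}$ gives $\mathbf{E}\,\|p-\hat p_S\|_{\mathcal{A}} \le c_0\sqrt{v/m}$ with no logarithmic factor; the bounded-differences property (each sample point moves every $\hat p_S(A)$, hence the supremum, by at most $1/m$) lets McDiarmid concentrate at rate $\exp(-2mt^2)$; and the case split correctly disposes of the regime $m\eps^2 = O(v)$ where the claimed bound exceeds $1$ and is vacuous. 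Your diagnosis of the main subtlety is also accurate and worth keeping: the classical Vapnik--Chervonenkis argument via Sauer--Shelah and a union bound over the growth function only gives $\exp(O(v\log(m/v))-\Omega(m\eps^2))$, which is genuinely weaker than the stated form (the $v\log m$ term cannot be absorbed into $c_2 v$), so chaining---or an appeal to Talagrand's refinement, which is what sources like \cite{AB99} ultimately invoke---is necessary, not optional. In short: the proposal is a valid and complete route to the lemma, filling in a proof the paper delegates entirely to the literature.
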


\begin{definition}[Yatrocas class] \label{def_yatrocas}
For a class $\fcal$ of functions from $X$ to $\mathbb{R}$,
their \emph{Yatrocas class} is the family of subsets of $X$ defined as
\[
\mathcal{Y}(\fcal) \coloneqq
\left \{
\{x\in X : f_1(x)\geq f_2(x)\}
\textnormal{ for some }
f_1,f_2\in\fcal
\right\}
\]
\end{definition}
% Let $f:X\rightarrow \mathbb{R}$ be a real-valued function. The epigraph of $f$ is defined by $EPI(f) = \{x\in X : f(x)\geq 0\}$. 

% Furthermore, the epigraph of a class of functions $\mathcal{F}$ is defined by $EPI(\mathcal{F}) = \{EPI(f) : f \in \mathcal{F}\}$.

Observe that if $f,g \in \fcal$ then
\(\|f - g \|_{TV} = \|f - g \|_{\mathcal{Y}(F)}\).

\begin{definition}[Empirical Yatrocas Minimizer]
Let $\mathcal{F}$ be a class of distributions over domain $X$. 
The \emph{empirical Yatrocas minimizer} is defined as
$L^{\mathcal{F}}:\cup_{m=1}^{\infty} X^m \to \mathcal{F}$ satisfying 
$$L^{\mathcal{F}}(S) =\arg\min_{q\in \mathcal{F}} \|q - \hat{p}_S \|_{\mathcal{Y}(\mathcal{F})}.$$
\end{definition}

\begin{theorem}[PAC Learning Families of Distributions]
\label{thm:distributionVC}
Let $\mathcal{F}$ be a class of probability distributions, and let $S\sim p^m$ be an i.i.d.\ sample of size $m$ generated from an arbitrary probability distribution $p$, which is not necessarily in $\mathcal{F}$. Then with probability at least $1 - \delta$ we have
$$\|p - L^{\mathcal{F}}(S)\|_{TV} \leq 3 \opt({\mathcal{F}},{p})  +   \alpha \sqrt{\frac{v + \log \frac{1}{\delta}}{m}}       $$
where $v$ is VC-dimension of $\mathcal{Y}(\mathcal{F})$, and $\opt({\mathcal{F}},{p}) = \inf_{q^* \in \mathcal{F}} \|q^* - p \|_{TV}$, and $\alpha$ is a universal constant.
In particular, in the realizable setting $p\in\mathcal F$, we have
$$\|p - L^{\mathcal{F}}(S)\|_{TV} \leq   \alpha \sqrt{\frac{v + \log \frac{1}{\delta}}{m}}       $$
\end{theorem}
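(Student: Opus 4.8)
The plan is to combine the uniform-convergence bound of Lemma~\ref{lemma:vc} with the defining optimality property of the empirical Yatrocas minimizer $L = L^{\mathcal{F}}(S)$, shuttling between total variation distance and $\mathcal{Y}(\mathcal{F})$-distance as needed. Two elementary observations drive everything. First, since $\mathcal{Y}(\mathcal{F})$ is a subfamily of the Borel sets, we always have $\|p-q\|_{\mathcal{Y}(\mathcal{F})} \le \|p-q\|_{TV}$ for arbitrary distributions $p,q$. Second, as recorded just before the definition of $L^{\mathcal{F}}$, whenever both $f,g \in \mathcal{F}$ the supremum defining the TV distance is attained on the set $\{x : f(x) \ge g(x)\} \in \mathcal{Y}(\mathcal{F})$, so the two distances coincide: $\|f-g\|_{TV} = \|f-g\|_{\mathcal{Y}(\mathcal{F})}$.

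First I would fix a near-optimal $q^* \in \mathcal{F}$ with $\|q^* - p\|_{TV} \le \opt(\mathcal{F},p) + \gamma$ for an arbitrary $\gamma>0$, to be sent to $0$ at the end (since the infimum need not be attained). Next I would apply Lemma~\ref{lemma:vc} with $\mathcal{A} = \mathcal{Y}(\mathcal{F})$, whose VC-dimension is $v$: choosing $\eps = \Theta\!\left(\sqrt{(v + \log(1/\delta))/m}\right)$ makes the bound $\exp(c_1 + c_2 v - c_3 m \eps^2)$ at most $\delta$, so with probability at least $1-\delta$ the key empirical estimate $\|p - \hat{p}_S\|_{\mathcal{Y}(\mathcal{F})} \le \eps$ holds. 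Condition on this event for the remainder of the argument.

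The core is then a short chain of triangle inequalities. Because $q^*, L \in \mathcal{F}$, I would write $\|p - L\|_{TV} \le \|p - q^*\|_{TV} + \|q^* - L\|_{TV} = \|p - q^*\|_{TV} + \|q^* - L\|_{\mathcal{Y}(\mathcal{F})}$, invoking the second observation to replace TV by the Yatrocas distance on the pair $(q^*, L)$. Applying the triangle inequality for the $\mathcal{Y}(\mathcal{F})$-pseudometric around $\hat{p}_S$ and then the optimality of $L$ (which minimizes $\|\,\cdot\, - \hat{p}_S\|_{\mathcal{Y}(\mathcal{F})}$ over $\mathcal{F}$, giving $\|\hat{p}_S - L\|_{\mathcal{Y}(\mathcal{F})} \le \|\hat{p}_S - q^*\|_{\mathcal{Y}(\mathcal{F})}$) yields $\|q^* - L\|_{\mathcal{Y}(\mathcal{F})} \le 2\|q^* - \hat{p}_S\|_{\mathcal{Y}(\mathcal{F})}$. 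Bounding the latter by $\|q^* - p\|_{\mathcal{Y}(\mathcal{F})} + \|p - \hat{p}_S\|_{\mathcal{Y}(\mathcal{F})} \le \|q^* - p\|_{TV} + \eps$ (the first observation together with the conditioned event) and collecting terms gives $\|p - L\|_{TV} \le 3\|q^* - p\|_{TV} + 2\eps$. Sending $\gamma \to 0$ replaces $\|q^* - p\|_{TV}$ by $\opt(\mathcal{F},p)$, and absorbing the factor $2$ and the constant from Lemma~\ref{lemma:vc} into $\alpha$ produces the claimed bound; the realizable case is the special case $\opt(\mathcal{F},p) = 0$.

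I do not anticipate a genuine obstacle, as the statement is a clean reduction to Lemma~\ref{lemma:vc}. The only points demanding care are the bookkeeping that produces the constant $3$ (one TV triangle step contributes one copy of $\|q^*-p\|_{TV}$, and the doubling forced by the optimality of $L$ contributes two more), and the discipline of interchanging the two notions of distance \emph{only} on pairs lying in $\mathcal{F}$, where the identity $\|f-g\|_{TV} = \|f-g\|_{\mathcal{Y}(\mathcal{F})}$ is licensed, while using the one-sided inequality $\|\cdot\|_{\mathcal{Y}(\mathcal{F})} \le \|\cdot\|_{TV}$ elsewhere.
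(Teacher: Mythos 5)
Your proposal is correct and follows essentially the same argument as the paper: the identical chain of triangle inequalities through $q^*$ and $\hat{p}_S$, the swap between $\|\cdot\|_{TV}$ and $\|\cdot\|_{\mathcal{Y}(\mathcal{F})}$ on pairs in $\mathcal{F}$, the optimality of $L^{\mathcal{F}}(S)$ to get the factor of $2$, and Lemma~\ref{lemma:vc} applied with $\mathcal{A}=\mathcal{Y}(\mathcal{F})$, yielding the same $3\opt(\mathcal{F},p)+2\eps$ bookkeeping. Your only deviation is the $\gamma$-approximate minimizer to handle a possibly unattained infimum, a minor refinement of the paper's use of $\arg\min$ that does not change the argument.
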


\begin{remark}
The $L_1$ distance is precisely twice the total variation distance.
\end{remark}

\begin{proof}
Let $q^* = \arg\min_{q \in \mathcal{F}} \|p - q \|_{TV}$,
so $\|q^*-p\|_{\mathcal{Y}(\mathcal{F})}\leq\|q^*-p\|_{TV}= \opt(\mathcal F,p)$.
Since
$L^{\mathcal{F}}(S), q^*\in\fcal$ we have
$
\|L^{\mathcal{F}}(S) - q^* \|_{TV} =\|L^{\mathcal{F}}(S) - q^* \|_{\mathcal{Y}(\mathcal{F})}$.
By Lemma~\ref{lemma:vc},
with probability $\geq1-\delta$
we have 
$\|p - \hat{p}_S \|_{\mathcal{A}} \leq
\alpha \sqrt{(v+\log \frac{1}{\delta})/{m}}$ for some universal constant $\alpha$.
Also, since $L^{\mathcal{F}}(S)$ is the empirical minimizer of the $\mathcal{Y}(\mathcal{F})$-distance, we have
$\|L^{\mathcal{F}}(S) - \hat{p}_S \|_{\mathcal{Y}(\mathcal{F})}
\leq
\|q^* - \hat{p}_S \|_{\mathcal{Y}(\mathcal{F})}$.
The proof follows from these facts combined with multiple applications of the triangle inequality:
\begin{align*}
\| p - L^{\mathcal{F}}(S)\|_{TV}  & \leq \|L^{\mathcal{F}}(S) - q^* \|_{TV} + \|q^* - p \|_{TV}\\
&=\|L^{\mathcal{F}}(S) - q^* \|_{\mathcal{Y}(\mathcal{F})} + \opt({\mathcal{F}},{p}) \\
&\leq \|L^{\mathcal{F}}(S) - \hat{p}_S \|_{\mathcal{Y}(\mathcal{F})} + \|\hat{p}_S - q^* \|_{\mathcal{Y}(\mathcal{F})} + \opt({\mathcal{F}},{p}) \\
&\leq \|q^* - \hat{p}_S \|_{\mathcal{Y}(\mathcal{F})} + \left(\|\hat{p}_S - p \|_{\mathcal{A}} + \|p - q^* \|_{\mathcal{Y}(\mathcal{F})}\right) +  \opt({\mathcal{F}},{p}) \\&
\leq \left(\|q^* - p \|_{\mathcal{Y}(\mathcal{F})} + \|p - \hat{p}_S \|_{\mathcal{A}}\right) +  \|p - \hat{p}_S \|_{\mathcal{Y}(\mathcal{F})} + 2\opt({\mathcal{F}},{p}) \\
& \leq \|q^* - p \|_{TV} + 2\|p - \hat{p}_S \|_{\mathcal{Y}(\mathcal{F})} + 2\opt({\mathcal{F}},{p}) \\
& \leq 2\alpha\sqrt{\frac{v + \log \frac{1}{\delta}}{m}} + 3\opt({\mathcal{F}},{p})\:.\qedhere
\end{align*}
\end{proof}

Theorem \ref{thm:distributionVC} provides a tool for proving upper bounds on the sample complexity of distribution learning. To prove Theorems \ref{thm:GaussianUpperBound} and \ref{thm:AAGaussianUpperBound}, it remains to show upper bounds on the VC dimensions of the Yatrocas class of (axis-aligned) Gaussian pdfs.

For classes $\mathcal F$ and $\mathcal G$ of functions, let
\[\operatorname{NN}(\mathcal{G})\coloneqq \left\{\{x : f(x)\geq0\}\textnormal{ for some }f\in \mathcal{G} \right\}\]
and
\[\Delta \mathcal{F} \coloneqq \{f_1-f_2 : f_1, f_2 \in \mathcal{F} \}\:, \]
and notice that 
\[\mathcal{Y}(\fcal) = \operatorname{NN}(\Delta \fcal).\]
We upper bound the VC-dimension of
$\operatorname{NN}(\Delta \fcal)$ via the following well known result in statistical learning theory, see, e.g.,~\cite[Lemma~4.2]{devroye_book}.

\begin{theorem}[Dudley] Let $\mathcal{G}$ be an $n$-dimensional vector space of real-valued functions.
Then $VC(\operatorname{NN}(\mathcal{G})) \leq n$.
\end{theorem}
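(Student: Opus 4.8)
The plan is to argue by contradiction using a linear-algebraic sign-counting argument. Suppose, towards a contradiction, that some set of $n+1$ distinct points $x_1,\dots,x_{n+1}\in X$ is shattered by $\operatorname{NN}(\mathcal{G})$. The first step is to consider the evaluation map $\Phi:\mathcal{G}\to\mathbb{R}^{n+1}$ defined by $\Phi(f)=(f(x_1),\dots,f(x_{n+1}))$. This is a linear map from an $n$-dimensional vector space, so its image is a subspace of $\mathbb{R}^{n+1}$ of dimension at most $n$, hence a proper subspace. Consequently there exists a nonzero vector $c=(c_1,\dots,c_{n+1})\in\mathbb{R}^{n+1}$ orthogonal to this image, yielding a single linear relation
\[
\sum_{i=1}^{n+1} c_i\, f(x_i)=0
\]
that holds \emph{simultaneously for every} $f\in\mathcal{G}$.

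Next I would exploit the shattering assumption against this relation. Since $c\neq 0$, after possibly replacing $c$ by $-c$ we may assume at least one coordinate satisfies $c_i>0$. I would then choose the dichotomy that labels $x_i$ as ``out'' (non-member) precisely when $c_i>0$, and ``in'' (member) when $c_i\le 0$; concretely, let $T=\{i: c_i\le 0\}$. Because $\{x_1,\dots,x_{n+1}\}$ is shattered by $\operatorname{NN}(\mathcal{G})$, there is some $f\in\mathcal{G}$ realizing this labeling, meaning $f(x_i)\ge 0$ exactly for $i\in T$ and $f(x_i)<0$ for $i\notin T$. Now I would inspect each term of $\sum_i c_i f(x_i)$: when $c_i>0$ we have $f(x_i)<0$, so $c_i f(x_i)<0$ strictly; when $c_i<0$ we have $f(x_i)\ge 0$, so $c_i f(x_i)\le 0$; and the terms with $c_i=0$ vanish. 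Since at least one coordinate has $c_i>0$, the sum is strictly negative, contradicting that it equals $0$. This contradiction shows no set of $n+1$ points can be shattered, giving $VC(\operatorname{NN}(\mathcal{G}))\le n$.

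The delicate point, and the step I expect to be the main obstacle, is the built-in asymmetry in the definition of $\operatorname{NN}(\mathcal{G})$ between the non-strict membership condition $f(x)\ge 0$ and the strict non-membership condition $f(x)<0$. To turn the homogeneous relation $\sum_i c_i f(x_i)=0$ into a strict inequality, one must align the strictly positive coefficients with the \emph{strict} side of the inequality: assigning the indices with $c_i>0$ to the ``out'' label forces those terms to be strictly negative, which is exactly what produces a genuine strict contradiction rather than a merely non-strict inequality. The normalization allowing us to assume some $c_i>0$ (via negating $c$) is what guarantees this strict term is present.
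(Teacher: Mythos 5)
Your proof is correct: the evaluation-map argument, the choice of a nonzero vector $c$ orthogonal to its image, and the sign-splitting dichotomy $T=\{i: c_i\le 0\}$ (with the normalization ensuring some $c_i>0$ so that a strictly negative term survives the non-strict side of the definition of $\operatorname{NN}(\mathcal{G})$) is exactly the classical Dudley argument. The paper itself does not prove this theorem but cites it (Lemma~4.2 of the Devroye--Lugosi book), and the proof given there is essentially the same as yours, so there is nothing to fix --- you even handled the one delicate point, the asymmetry between $f(x)\ge 0$ for membership and $f(x)<0$ for non-membership, correctly.
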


Now let $h$ be an indicator function
for an arbitrary element in $\operatorname{NN}(f_1-f_2)$,
where $f_1,f_2$ are pdfs of (axis-aligned) Gaussians.
Then $h$ is a $\{0,1\}$-valued function and we have:
\begin{align*}
h(x) & = \mathbbm{1}\{\mathcal{N}(\mu_1, \Sigma_1) > \mathcal{N}(\mu_2, \Sigma_2)\}\\
&  = \mathbbm{1}\left\{ \alpha_1 \exp(\frac{-1}{2}(x-\mu_1)^T\Sigma_1^{-1}(x-\mu_1)  ) >  \alpha_2 \exp(\frac{-1}{2}(x-\mu_2)^T\Sigma_2^{-1}(x-\mu_2)  ) \right\}\\
& = \mathbbm{1}\left\{ (x-\mu_1)^T\Sigma_1^{-1}(x-\mu_1)  -(x-\mu_2)^T\Sigma_2^{-1}(x-\mu_2) - \log \frac{\alpha_2}{\alpha_1} >0 \right\}\:.
\end{align*}
The inner expression is a quadratic form, and the linear dimension of all quadratic functions is $O(d^2)$. 
Furthermore, for axis-aligned Gaussians, $\Sigma_1$ and $\Sigma_2$ are diagonal, and therefore, the inner function lies in an $O(d)$-dimensional space of functions spanned by $\{1,x_1,\dots,x_d,x_1^2,\dots,x_d^2\}$. Hence, by Dudley's theorem, we have the required upper bound ($d$ or $d^2$) on the VC-dimension of the Yatrocas classes. 
Finally, Theorems \ref{thm:GaussianUpperBound} and \ref{thm:AAGaussianUpperBound} follow from applying Theorem \ref{thm:distributionVC} to the class of (axis-aligned) Gaussian distributions.

\noindent\textbf{Acknowledgements.}
{We would like to thank the reviewers of the ALT conference and also Yaoliang Yu for 
pointing out mistakes in earlier versions of this paper.
Abbas Mehrabian was supported by an NSERC Postdoctoral Fellowship and a Simons-Berkeley Research Fellowship.}

\end{document}